\documentclass{article}

\usepackage{microtype}
\usepackage{graphicx}
\usepackage{subcaption}
\usepackage{booktabs} % for professional tables
\usepackage{url}
\usepackage{graphicx}
\usepackage{amsmath}
\usepackage{amsthm}
\usepackage{booktabs}
\usepackage[switch]{lineno}
\usepackage{amssymb}
\usepackage{enumitem}
\usepackage{tikz}
\usepackage{tikz}
\usetikzlibrary{matrix, positioning, calc, shapes.geometric, arrows.meta, fit, backgrounds, patterns, decorations.pathreplacing}
\usepackage{listings}
\usepackage{ulem}
\usepackage{subcaption}
\usepackage{hyperref}

\usepackage[bb=dsserif]{mathalpha}

\usepackage[accepted]{icml2026}

\usepackage{booktabs}
\usepackage{pifont}
\newcommand{\cmark}{\ding{51}}

\usepackage{amsmath}
\usepackage{amssymb}
\usepackage{mathtools}
\usepackage{amsthm}

\usepackage[capitalize,noabbrev]{cleveref}

\theoremstyle{plain}
\newtheorem{theorem}{Theorem}[section]
\newtheorem{proposition}[theorem]{Proposition}
\newtheorem{lemma}[theorem]{Lemma}

\theoremstyle{definition}
\newtheorem{definition}[theorem]{Definition}
\newtheorem{assumption}[theorem]{Assumption}
\theoremstyle{remark}
\newtheorem{remark}[theorem]{Remark}

\usepackage[textsize=tiny]{todonotes}

\tikzset{
    event/.style={circle, draw=blue, fill=blue, inner sep=2pt},
    cnode/.style={circle, draw, minimum size=18pt},
    axis/.style={thick},
    causal/.style={->, thick}
}

\begin{document}

\twocolumn[
  % One-Shot Causal Structure Learning from a Single High-Dimensional Event Sequence
  
  %\icmltitle{One-Shot Causal Discovery from a Single High-Dimensional Event Sequence}

%\icmltitle{TRACE: Extracting Instance-Specific Causal Graphs from Pretrained Language Models}

\icmltitle{Your Autoregressive Model Already Reveals the Causal Graph}

%\icmltitle{Autoregressive Models are Causal Discovery Engines: Scalable Sample-Level Causal Discovery in Event Sequences}

%\icmltitle{Scalable Amortized Causal Discovery from Single Sequences via Language Models as Autoregressive Density Estimator}
%\icmltitle{TRACE: Scalable Causal Discovery from a Single Sequence via Amortized Next-Token Prediction}

%\icmltitle{Scalable Causal Discovery from Single Event Sequences via Amortized Next-Token Prediction}

%\icmltitle{Scalable Causal Discovery from a Single Sequence via Language Models as Density Estimators}

  %\icmltitle{One-Shot Causal Structure Learning in High-Dimensional Discrete Event Sequences via Language Models"}
  
  %\icmltitle{Scalable Causal Discovery from a Single High-Dimensional Discrete Event Sequence}
  
 % \icmltitle{Causal Structure Learning from a Single High-Dimensional Sequence}
 
 % \icmltitle{One-Shot Event-to-Event Causal Discovery from Single High-Dimensional Sequence}

  % It is OKAY to include author information, even for blind submissions: the
  % style file will automatically remove it for you unless you've provided
  % the [accepted] option to the icml2026 package.

  % List of affiliations: The first argument should be a (short) identifier you
  % will use later to specify author affiliations Academic affiliations
  % should list Department, University, City, Region, Country Industry
  % affiliations should list Company, City, Region, Country

  % You can specify symbols, otherwise they are numbered in order. Ideally, you
  % should not use this facility. Affiliations will be numbered in order of
  % appearance and this is the preferred way.
  \icmlsetsymbol{equal}{*}

  \begin{icmlauthorlist}
    \icmlauthor{Hugo Math}{yyy}
    \icmlauthor{Rainer Lienhart}{yyy}
  \end{icmlauthorlist}

 % \icmlaffiliation{comp}{BMW Group, Munich, Germany} 
  %\icmlaffiliation{yyy}{Department of Machine Learning \& Computer Vision, University of Augsburg, Augsburg, Germany}
  \icmlaffiliation{yyy}{Department of Machine Learning \& Computer Vision, University of Augsburg, Augsburg, Germany}

 % \icmlaffiliation{sch}{School of ZZZ, Institute of WWW, Location, Country}

  \icmlcorrespondingauthor{Hugo Math}{hugo.math@bmwgroup.com}
  \icmlcorrespondingauthor{Rainer Lienhart}{rainer.lienhart@uni-a.de}

  % You may provide any keywords that you find helpful for describing your
  % paper; these are used to populate the "keywords" metadata in the PDF but
  % will not be shown in the document
  \icmlkeywords{Machine Learning, ICML}

  \vskip 0.3in
]

% this must go after the closing bracket ] following \twocolumn[ ...

% This command actually creates the footnote in the first column listing the
% affiliations and the copyright notice. The command takes one argument, which
% is text to display at the start of the footnote. The \icmlEqualContribution
% command is standard text for equal contribution. Remove it (just {}) if you
% do not need this facility.

% Use ONE of the following lines. DO NOT remove the command.
% If you have no special notice, KEEP empty braces:
\printAffiliationsAndNotice{}  % no special notice (required even if empty)
\begin{abstract}
    Autoregressive models trained via next-token prediction implicitly learn the conditional independence structure of their data-generating process. We exploit this observation to perform scalable causal discovery from a single observed sequence of discrete events—without any task-specific retraining. Such single-stream settings arise naturally in vehicle diagnostics, manufacturing systems, and patient trajectories, yet they remain largely unsolved: the absence of repeated samples, massive event vocabularies, and long-range temporal dependencies render existing methods either inaccurate or computationally intractable. We introduce TRACE, a framework that repurposes any pretrained autoregressive model as a density estimator for conditional mutual information, the fundamental primitive for conditional independence testing. By constructing parallelized CI tests on GPUs, TRACE recovers both the sample-level time causal graph and its summary projection, scaling linearly with the vocabulary size while naturally handling delayed causal effects. Crucially, we prove that minimizing the standard cross-entropy pretraining loss directly minimizes an upper bound on the causal identification error, establishing a duality between sequence prediction and causal discovery. On nonlinear SCMs ($|\mathcal{X}| \in [200, 8000]$) and real-world vehicle diagnostic logs ($|\mathcal{X}| = 29{,}100$), TRACE is the first applicable method at this scale, outperforming the strongest baseline by over 20 F1 points.
\end{abstract}

\section{Introduction}
% motivation -- light citations
Suppose we observed a single realization of a discrete stochastic process, for instance, the symptoms, tests, and disease evolution of a patient \cite{Li2021CausalHM, bihealth}, a manufacturing line's tests, or some diagnostic codes generated by a vehicle \cite{math2024harnessingeventsensorydata, math2026contextinformed}. 
Which of these discrete events influences the occurrence of the others? It is well known that understanding these causal relations in event sequences is critical to perform effective prediction, root-cause analysis, diagnosis, and overall decision-making \cite{liu2025learning}.
%in healthcare \cite{Li2021CausalHM, bihealth}, cybersecurity \cite{MANOCCHIO2024122564}, flight operations \cite{flight_service_cd}, and automotive diagnostics \cite{math2024harnessingeventsensorydata} to perform effective prediction, diagnosis and overall decision-making \cite{liu2025learning}
%, where complex systems generate long, high-dimensional streams of asynchronous events, often involving hundreds or thousands of distinct event types. %whose interactions govern system failures, disease progression, or security incidents. 
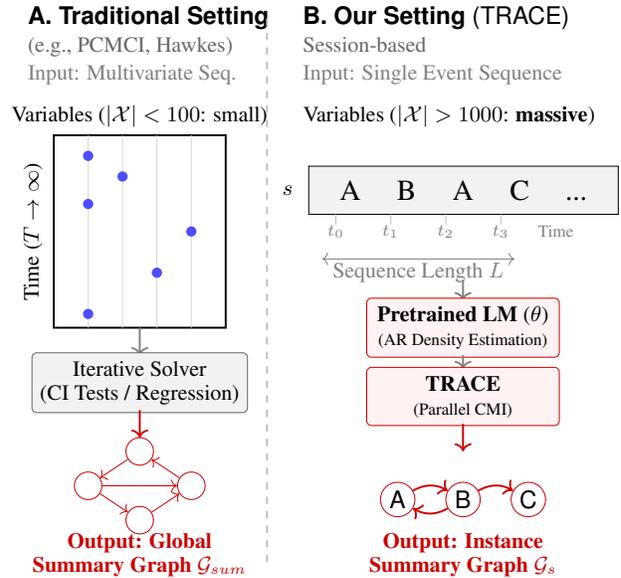
\begin{figure}[t]
    \centering
    \resizebox{\columnwidth}{!}{%
    \begin{tikzpicture}[font=\sffamily]

    % ==========================================
    % LEFT PANEL: TRADITIONAL
    % ==========================================
    
    % --- Header ---
    \node[anchor=north west] at (0,0.8) {\textbf{A. Traditional Setting}};
    \node[anchor=north west, text=gray!80!black, font=\small] at (0,0.35) {(e.g., PCMCI, Hawkes)};
    \node[anchor=north west, text=gray, font=\footnotesize] at (0,-0.05) {Input: Multivariate Seq.};
    
    % --- Variables Label ---
    \node[anchor=south, font=\footnotesize] at (1.75,-1.2) {Variables ($|\mathcal{X}| < 100$: small)};

    % --- The Matrix ---
    \draw[thick] (0.5,-1.2) rectangle (3.0,-4);
    % Grid & Data
    \foreach \x in {1, 1.5, 2, 2.5} { \draw[gray!30] (\x,-1.2) -- (\x,-4); }
    \node[rotate=90, font=\footnotesize] at (0.2,-2.6) {Time ($T \to \infty$)};
    \foreach \x/\y in {1/-1.5, 1.5/-1.8, 1/-2.2, 2.5/-2.6, 2/-3.2, 1/-3.8} {
        \node[circle, fill=blue!70, inner sep=1.5pt] at (\x,\y) {};
    }

    % --- PROCESS BLOCK: Statistical Inference ---
    \draw[->, thick, gray] (1.75,-4.0) -- (1.75,-4.4);
    
    \node[draw=gray!80!black, fill=gray!10, rounded corners=2pt, minimum width=2.5cm, minimum height=0.8cm, align=center, font=\footnotesize] (solver) at (1.75, -4.8) {Iterative Solver\\(CI Tests / Regression)};

    % --- Output: Global Graph ---
    \draw[->, thick, red!80!black] (1.75,-5.2) -- (1.75,-5.6);
    
    % The Mini Graph (Dense/Mesh)
    \node[circle, draw=red!80!black, inner sep=1pt, minimum size=0.4cm] (n1) at (1.0, -6.3) {};
    \node[circle, draw=red!80!black, inner sep=1pt, minimum size=0.4cm] (n2) at (2.5, -6.3) {};
    \node[circle, draw=red!80!black, inner sep=1pt, minimum size=0.4cm] (n3) at (1.75, -5.8) {};
    \node[circle, draw=red!80!black, inner sep=1pt, minimum size=0.4cm] (n4) at (1.75, -6.8) {};
    
    \draw[->, red!80!black] (n1) -- (n2); \draw[->, red!80!black] (n3) -- (n1);
    \draw[->, red!80!black] (n2) -- (n3); \draw[->, red!80!black] (n1) -- (n4);
    \draw[->, red!80!black] (n4) -- (n2);
    
    \node[text=red!80!black, font=\bfseries\footnotesize, align=center] at (1.75,-7.3) {Output: Global\\Summary Graph $\mathcal{G}_{sum}$};

    % ==========================================
    % SEPARATOR
    % ==========================================
    \draw[dashed, thick, gray!50] (3.6,0.5) -- (3.6,-7.5);

    % ==========================================
    % RIGHT PANEL: YOURS
    % ==========================================
    
    % --- Header ---
    \node[anchor=north west] at (4.0,0.8) {\textbf{B. Our Setting} (TRACE)};
    \node[anchor=north west, text=gray!80!black, font=\small] at (4.0,0.35) {Session-based};
    \node[anchor=north west, text=gray, font=\footnotesize] at (4.0,-0.05) {Input: Single Event Sequence};
    
    % --- Variables Label ---
    \node[anchor=south, font=\footnotesize] at (6.25,-1.2) {Variables ($|\mathcal{X}| > 1000$: \textbf{massive})};

    % --- The Sequence ---
    \node[draw, fill=gray!10, minimum width=4.5cm, minimum height=0.7cm, anchor=west, font=\large] (seq) at (4.2,-2.0) {A \quad B \quad A \quad C \quad ...};
    \node[anchor=east, font=\bfseries\small] at (4.1,-2.0) {$s$};
    
    % Timestamps
    \foreach \x/\t in {4.6/0, 5.4/1, 6.2/2, 7.0/3} {
         \node[font=\scriptsize, text=gray] at (\x, -2.6) {$t_{\t}$};
         \draw[gray!50] (\x, -2.35) -- (\x, -2.5);
    }
    \node[font=\scriptsize, text=gray] at (7.8, -2.6) {Time};
    % Context
    \draw[<->, gray] (4.4,-3.0) -- (7.2,-3.0);
    \node[font=\footnotesize, text=gray] at (5.8,-3.2) {Sequence Length $L$};

    % --- PROCESS PIPELINE (YOUR METHOD) ---
    \draw[->, thick, gray] (6.45,-3.3) -- (6.45,-3.6);

    % Block 1: LM
    \node[draw=red!80!black, fill=red!5, rounded corners=2pt, minimum width=2.8cm, minimum height=0.6cm, font=\footnotesize, align=center] (lm) at (6.45, -4.0) {\textbf{Pretrained LM} ($\theta$)\\ \scriptsize (AR Density Estimation)};
    
    % Arrow
    \draw[->, thick, gray] (6.45,-4.4) -- (6.45,-4.6);
    
    % Block 2: Vectorized Op
    \node[draw=red!80!black, fill=red!5, rounded corners=2pt, minimum width=2.8cm, minimum height=0.6cm, font=\footnotesize, align=center] (vec) at (6.45, -5.0) {\textbf{TRACE}\\ \scriptsize (Parallel CMI)};

    % --- Output: One-Shot Graph ---
    \draw[->, thick, red!80!black] (6.45,-5.4) -- (6.45,-5.8);

    % The Mini Graph (Specific Path)
    \node[circle, draw=red!80!black, inner sep=1pt, minimum size=0.5cm] (nA) at (5.5, -6.5) {A};
    \node[circle, draw=red!80!black, inner sep=1pt, minimum size=0.5cm] (nB) at (6.45, -6.5) {B};
    \node[circle, draw=red!80!black, inner sep=1pt, minimum size=0.5cm] (nC) at (7.4, -6.5) {C};
    
    \draw[->, thick, red!80!black] (nA) to[bend left] (nB);
    \draw[->, thick, red!80!black] (nB) to[bend left] (nC);
    \draw[->, thick, red!80!black] (nB) to[bend left] (nA); 
        
    \node[text=red!80!black, font=\bfseries\footnotesize, align=center] at (6.45,-7.3) {Output: Sample\\Summary Graph $\mathcal{G}_{s}$};

    \end{tikzpicture}
    }
    \caption{\textbf{Methodological Shift.} \textbf{(A) Traditional Causal Discovery in Sequences} (e.g., PCMCI, Hawkes, Granger) relies on iterative solvers (CI-tests) over long multivariate time series ($T \to \infty$). \textbf{(B) Our TRACE Approach} processes a single sequence (e.g., event logs, user interactions, patient trajectories) through a pretrained autoregressive (AR) model as density estimator to compute the Conditional Mutual Information (CMI) in parallel, enabling scalable causal discovery over massive vocabularies ($|\mathcal{X}| > 1000$).}
    \label{fig:data_paradigm}
\end{figure}
Despite its importance, causal discovery in discrete event sequences remains largely underexplored \cite{hasan2023a}. Classical methods, grounded in Pearl's structural causal models (SCM) \cite{pearl_2009}, typically assume the underlying structure is a Directed Acyclic Graph (DAG). However, these approaches scale poorly with dimensionality \cite{constrainct_based_cd, dag_no_tears} and are primarily designed for tabular data. In sequential settings, standard approaches such as Hawkes processes \cite{transformerhawkeprocess} or Granger causality \cite{granger_causality} often rely on restrictive parametric assumptions and capture a weak notion of causality, mostly akin to probabilistic causation \cite{Eells_1991}. While recent information-theoretic approaches \citet{mdlh, cueppers2024causal} have made progress, they primarily infer relations across multiple parallel streams (e.g., distinct users or sensors).
%Despite its importance, causal discovery in discrete event sequences remains largely underexplored \cite{hasan2023a}. Classical methods, grounded in Pearl's model of causality, which assumes that the underlying causal structure is in the form of a directed acyclic graph (DAG) \cite{pearl_2009}, scale poorly with dimensionality \cite{constrainct_based_cd, dag_no_tears} and are designed for tabular data. In sequential settings, standard approaches such as Hawkes-process-based methods \cite{transformerhawkeprocess} and Granger \cite{granger_causality} causality make too many parametric assumptions, carrying a weak notion of causality, mostly close to probabilistic causation \cite{Eells_1991}. Other recent information-theoretic approaches \citet{mdlh, cueppers2024causal} primarily infer causal relations across multiple event streams, where each stream represents a different entity (e.g., users, sensors, or alarms). 

These methods are ill-suited for recovering causal structure within a single event stream, nor do they scale to the long, noisy, and heterogeneous sequences encountered in modern industrial systems \cite{math2024harnessingeventsensorydata}. As the number of possible DAGs grows super-exponentially with the number of nodes \cite{bn_np_hard}, performing exhaustive conditional independence (CI) tests from observational data is combinatorially intractable. Yet, practitioners frequently need to reason about causality within a single observed sequence—a sample-level problem where inference must be performed on a specific sample. This poses a fundamentally harder challenge than population-level discovery, as it requires identifying conditional independencies among event types from a single realization of the process

In the era of large-scale pretraining, Autoregressive Language Models (AR LMs) \cite{gpt, touvron2023llamaopenefficientfoundation} have emerged as powerful density estimators, encoding rich conditional distributions over complex contexts to predict the next token \cite{draxler2025transformers}. Consequently, there is growing interest in amortized causal discovery \cite{amortized_cd_time, balazadeh2025causalpfn, math2026neurosymbolic}, where the heavy computational cost is shifted to pretraining a single model that can infer causal structures. We propose to take this a step further by repurposing existing predictive priors rather than training specialized models. This effectively transforms a forecaster into a causal discovery engine without the need for task-specific retraining.
%We introduce TRACE (\textit{\uline{T}emporal \uline{R}econstruction via \uline{A}utoregressive \uline{C}ausal \uline{E}stimation}), a framework for recovering causal structure from a single high-dimensional event sequence by repurposing autoregressive sequence models as density estimators. TRACE treats the event stream as a high-order Markov process and leverages the learned conditional distributions of pretrained AR models to perform scalable conditional independence testing. Delayed causal effects are detected via information-theoretic criteria that block spurious paths through simulated interventions on intermediate events. Crucially, TRACE is fully parallelizable on GPUs and scales linearly with the event vocabulary, enabling causal discovery in regimes previously considered intractable. Our results show that causal structure can be recovered from a single long trajectory using pretrained sequence models—often well before density estimation converges. This reframes autoregressive models from passive predictors into practical causal discovery engines operating directly on raw sequences. In summary, our key contributions are as follows:

We introduce TRACE (\textit{Temporal Reconstruction via Autoregressive Causal Estimation}), a framework that exploits the density estimation capabilities of AR models to recover the summary causal graph from a single sequence \cite{assaad_survey_ijcai_cd_time_series}. By treating the event stream as a high-order Markov chain, TRACE employs AR LMs to perform scalable conditional independence testing via their learned distributions. Delayed causal effects are detected via information-theoretic criteria that block spurious paths through simulated interventions on intermediate events. Crucially, TRACE is fully parallelizable on GPUs and scales linearly with the vocabulary size, enabling causal discovery in regimes previously considered intractable. Our results show that causal structure becomes identifiable well before the autoregressive model fully converges.
%Our results show that causal structure can be recovered well before density estimation converges.
%Taken together, our results show that causal structure can be recovered from a single trajectory using pretrained sequence models—often well before density estimation fully converges. This reframes autoregressive models as practical causal discovery engines. 
In summary:

\begin{itemize}[leftmargin=*, noitemsep, topsep=1pt] 
\item \textbf{Prediction-Causality Duality}: We establish that causal identifiability is achievable for any autoregressive model that sufficiently approximates the data-generating process. We derive bounds relaxing the standard Oracle CI-test assumption to an $\epsilon$-regime, proving that the causal graph is recoverable up to a noise floor determined by the model's convergence.

\item \textbf{Amortized Single-Sequence Discovery}: We propose TRACE, the first framework designed to recover the summary causal graph from a single high-dimensional sequence by amortizing the learning of dynamics via a pre-trained AR model. This addresses an underexplored area.
\item \textbf{Backbone Agnosticism}: A key advantage of TRACE is its architectural modularity. The framework strictly decouples \textit{density estimation} (Phase 1) from \textit{causal discovery} (Phase 2). Consequently, TRACE can leverage any state-of-the-art autoregressive backbone.
%\item \textbf{Linear Complexity in High Dimensions}: TRACE scales linearly with the vocabulary size, bypassing the combinatorial explosion of constraint-based structural learning. This enables the processing of sequences with thousands of event types—regimes previously inaccessible.

\item \textbf{Real-World Applicability}: We empirically validate TRACE on synthetic SCMs with challenging vocabulary, memory requirements, and apply it to real-world vehicle diagnostic logs, demonstrating its practical utility.
\end{itemize}

\section{Related Work}
\begin{table*}[!t]
\centering
\small
\setlength{\tabcolsep}{4pt} 
\begin{tabular}{lcccccc}
\toprule
\textbf{Method Class} 
& \parbox{2.1cm}{\centering \textbf{Discrete} \textbf{Events}} 
& \parbox{1.8cm}{\centering \textbf{High} \textbf{Dim.}} 
& \parbox{1.8cm}{\centering \textbf{Non-}\textbf{Param.}} 
& \parbox{0.7cm}{\centering \textbf{Lags}} 
& \parbox{2.1cm}{\centering \textbf{Sample-} \textbf{Level}} 
& \parbox{2.5cm}{\centering \textbf{Linear} \textbf{Complexity}} \\
\midrule
Constraint-based (PCMCI, FCI) 
& $\times$ & $\times$ & \cmark & \cmark & $\times$ & $\times$ \\

Score-based (DYNOTEARS) 
& $\times$ & $\times$ &  $\times$ & \cmark & $\times$ & $\times$ \\

Granger (TCDF, CAUSE) 
& $\times$ & $\times$ & $\times$ & \cmark & $\times$ & $\times$ \\

Noise-based (VarLiNGAM) 
& $\times$ & $\times$ & $\times$ & \cmark & $\times$ & $\times$ \\

Hawkes / TPP Models (THP, SHP)
& \cmark & $\times$ & $\times$ & \cmark & $\times$ & $\times$ \\

Info-Theoretic (NPHC, CASCADE) 
& \cmark & $\times$ & \cmark & \cmark & $\times$ & $\times$ \\

\textbf{TRACE (Ours)} 
& \cmark & \cmark & \cmark & \cmark & \cmark & \cmark \\
\bottomrule
\end{tabular}
\caption{Comparison of causal discovery methods. \textbf{Discrete Events}: Operates on discrete event sequences (e.g., text, logs) rather than multivariate time series. \textbf{High Dim}: Computationally tractable for large vocabularies ($|\mathcal{X}| > 10^3$). \textbf{Non-Param.}: Agnostic to the functional form (e.g., linearity) of the distribution. \textbf{Lags}: Models delayed causal effects. \textbf{Sample-Level}: Infers a local causal graph specific to a single sequence, rather than a global graph. \textbf{Linear Complexity}: Complexity scales linearly with the vocabulary size \(|\mathcal{X}|\).}
\label{tab:related_work}
\end{table*}
\paragraph{Event Sequences}
Event sequences are commonly represented as a finite sequence of time-stamped discrete events \(s = \{(t_1, x_1), \ldots, (t_L, x_L)\}\) where \(0 \leq t_1 < \ldots \leq t_L\) denotes the time of occurrence of event type \(x_i\). It has been widely applied to predictive tasks. For instance, in healthcare, electronic health records encode temporal sequences of symptoms, test results, and treatments that are predictive of downstream diagnosis \citet{MedBERT, bihealth, pmlr-v219-labach23a}. In the automotive domain, Diagnostic Trouble Codes (DTCs) are logged asynchronously over time and used to infer failures or error patterns \cite{math2024harnessingeventsensorydata}. Transformers \cite{tf} have emerged as the dominant architecture for sequence modelling, thanks to their ability to model long-range dependencies through self-attention \cite{gpt, touvron2023llamaopenefficientfoundation}. %Through this paper, we build on autoregressive Transformers and repurpose them toward causal discovery. 
These autoregressive models factorize the joint probability of a sequence \(s = (x_1, \ldots, x_L)\) as \(P(s) = \prod_{i=1}^{L} P(x_i \mid x_1, \ldots, x_{i-1})\)
Then, the training objective is to maximize the log-likelihood of the sequence drawn from a dataset \(D\):
\begin{equation}\label{eq:cross_entropy_lm}
    \mathcal{L}_{AR} = \mathbb{E}_{s \sim D}\big[-\sum^L_{i=1}\log P_\theta (x_i|x_1, \dots, x_{i-1}) \big]
\end{equation}
\paragraph{Amortized Causal Discovery}
Recent work has explored AR models as tools for causal inference. For example, \cite{density_estimator_2O21_journal_ci} leverages density estimators to simulate interventions and compute average treatment effects. \cite{im2024usingdeepautoregressivemodels} shows that autoregressive language models can approximate sequential Bayesian networks, treating the model itself as a statistical engine for causal inference. Recently, \cite{balazadeh2025causalpfn} uses a prior-fitted network (PFN) and Transformers to estimate causal effects in tabular data, and \cite{kim2025largescale} leverages Transformers to learn causal factors of a target. These findings motivate our use of pretrained LMs for causal discovery.

\subsection{Causal Discovery in Event Sequences}

Distinguishing between the multi-stream and single-stream paradigm is crucial, as they fundamentally address different causal problems (Fig~\ref{fig:data_paradigm}). We point out the current methods' limitations in Tab.~\ref{tab:related_work} and in Appendix~\ref{app:related_work} theoretically.

\paragraph{Multiple Streams (Standard).}

The most common paradigm considers a long multivariate time series, each corresponding to an individual entity (e.g., sensor, user, or machine). The goal is to uncover how the occurrence of events in one sequence 
influences the occurrences in others. It is traditionally done via Granger-based \cite{granger_causality, Shojaie2010DiscoveringGG, tcdf, cause} (TCDF, CAUSE), constraint-based~\cite{pcmci} (PCMCI), functional~\cite{varlingam}(VARLiNGAM) or optimization-based methods ~\cite{dynotears} (DYNOTEARS). Modern information-theoretic variants (NPHC \cite{nphc}, CASCADE~\cite{cueppers2024causal}) and neural point processes (THP \cite{transformerhawkeprocess}, SHTP~\cite{shtp}) offer flexibility regarding data assumptions but fail to scale. These methods typically exhibit quadratic or cubic complexity with respect to the variable count, rendering them computationally intractable for the massive vocabularies ($|\mathcal{X}| > 1000$).

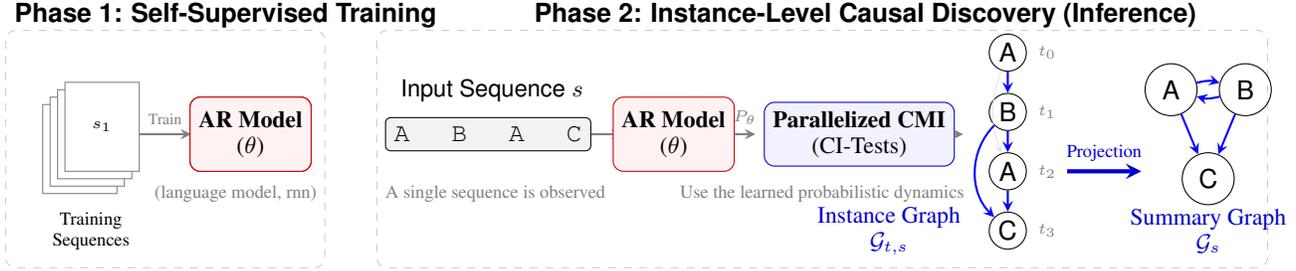
\begin{figure*}[t]
    \centering
    \resizebox{\textwidth}{!}{%
    \begin{tikzpicture}[font=\sffamily, >=stealth]

    % =======================================================
    % PHASE 1: PRE-TRAINING
    % =======================================================
    
    % Background Box
    \node[anchor=north west] at (-0.5, 2.7) {\textbf{Phase 1: Self-Supervised Training}};
    \draw[dashed, gray!40, rounded corners] (-0.5, 2.2) rectangle (3.8, -1.0);
    
    % GUIDANCE SENTENCE (PHASE 1)
   % \node[anchor=south, font=\scriptsize\itshape, text=gray] at (1.65, -2.9) {Objective: Next-token prediction on  domain-specific sequences};

    % The Corpus (Multiple Rectangles)
    % FIX: Added $ around \n to prevent Missing $ inserted error with s_N
    \foreach \x/\y/\n in {0/0/s_N, 0.1/0.1/\dots, 0.2/0.2/s_2, 0.3/0.3/s_1} {
        \draw[fill=white, draw=gray] (\x, \y) rectangle (\x+1.0, \y+1.2);
        \node[font=\tiny] at (\x+0.5, \y+0.6) {$\n$}; 
    }
    \node[align=center, font=\scriptsize] at (0.65, -0.5) {Training\\Sequences};

    % Arrow to LM (Learning)
    \draw[->, thick, gray] (1.3, 0.8) -- (2.0, 0.8) node[midway, above, font=\tiny] {Train};

    % The LM (Training Mode)
    \node[draw=red!80!black, fill=red!5, rounded corners, minimum width=1.4cm, minimum height=1cm, align=center, font=\small] (lm_train) at (2.8, 0.8) {\textbf{AR Model}\\($\theta$)};
    \node[align=center, font=\scriptsize, text=gray] at (2.6, 0) {(language model, rnn)};

    % =======================================================
    % PHASE 2: ONE-SHOT DISCOVERY
    % =======================================================
    
    % Background Box (Spanning the rest)
    \node[anchor=north west] at (6.5, 2.7) {\textbf{Phase 2: Sample-Level Causal Discovery (Inference)}};
    \draw[dashed, gray!40, rounded corners] (4.5, 2.2) rectangle (16.5, -1.0);
    
    % GUIDANCE SENTENCE (PHASE 2)
  %  \node[anchor=south, font=\scriptsize\itshape, text=gray] at (10.5, -2.9) {Objective: Given a single observed sequence $s$, infer Sample-level causality in parallel with the CMI};

    % 1. Input Sequence s
    \node[anchor=west] at (4.7, 1.4) {\footnotesize Input Sequence $s$};
    \node[draw, fill=gray!10, rounded corners=2pt, minimum width=2.5cm] (seq) at (6.0, 0.8) {\tt A \quad B \quad A \quad C};

      % The LM (Training Mode)
    \node[draw=red!80!black, fill=red!5, rounded corners, minimum width=1.4cm, minimum height=1cm, align=center, font=\small] (lm_train) at (2.8, 0.8) {\textbf{AR Model}\\($\theta$)};
    \node[align=center, font=\scriptsize, text=gray] at (6.1, 0) {A single sequence is observed};

       % The LM (Training Mode)
    \node[draw=red!80!black, fill=red!5, rounded corners, minimum width=1.4cm, minimum height=1cm, align=center, font=\small] (lm_train) at (2.8, 0.8) {\textbf{AR Model}\\($\theta$)};
    \node[align=center, font=\scriptsize, text=gray] at (10.5, 0) {Use the learned probabilistic dynamics};

    % 2. Arrow to LM (Inference)
    \draw[->, thick, gray] (seq) -- (7.8, 0.8);

    % 3. The LM (Inference Mode - Reused)
    \node[draw=red!80!black, fill=red!5, rounded corners, minimum width=1.4cm, minimum height=1cm, align=center, font=\small] (lm_inf) at (8.5, 0.8) {\textbf{AR Model}\\($\theta$)};
    
    % 4. Arrow to CMI (Probabilities + Sampling)
    % Added "Sampling" here to acknowledge the algo without adding a box
    \draw[->, thick, gray] (lm_inf) -- (9.6, 0.8) node[midway, above, font=\tiny, align=center] {$P_\theta$};

    % 5. Vectorized CMI Block
    \node[draw=blue!80!black, fill=blue!5, rounded corners, minimum width=2.4cm, minimum height=0.8cm, align=center, font=\footnotesize] (cmi) at (11.0, 0.8) {\textbf{Parallelized CMI}\\(CI-Tests)};
    
    % 6. Arrow to Sample Graph
    \draw[->, thick, gray] (cmi) -- (12.4, 0.8);

    % 7. The Sample Graph (Pruned)
    % Nodes
    \node[circle, draw, inner sep=1pt, minimum size=0.5cm] (t0) at (13.0, 1.9) {A}; \node[right=1pt, font=\tiny, gray] at (t0.east) {$t_0$};
    \node[circle, draw, inner sep=1pt, minimum size=0.5cm] (t1) at (13.0, 1.1) {B}; \node[right=1pt, font=\tiny, gray] at (t1.east) {$t_1$};
    \node[circle, draw, inner sep=1pt, minimum size=0.5cm] (t2) at (13.0, 0.3) {A}; \node[right=1pt, font=\tiny, gray] at (t2.east) {$t_2$};
    \node[circle, draw, inner sep=1pt, minimum size=0.5cm] (t3) at (13.0, -0.5) {C}; \node[right=1pt, font=\tiny, gray] at (t3.east) {$t_3$};

    % Edges
    \draw[->, gray!20] (t0) to[bend right=20] (t2); % Ghost
    \draw[->, thick, blue] (t0) -- (t1); % A->B
    \draw[->, thick, blue] (t1) -- (t2); % B->A
    \draw[->, thick, blue] (t2) -- (t3); % A->C
    \draw[->, thick, blue] (t1) to[bend right=45] (t3); % B->C

    \node[align=center, font=\small, text=blue!80!black] at (11.4, -0.5) {Sample Graph\\$\mathcal{G}_{t,s}$};

    % 8. PROJECTION ARROW (The Final Step)
    \draw[->, ultra thick, blue!80!black] (13.8, 0.3) -- (14.8, 0.3) node[midway, above, font=\scriptsize] {Projection};

    % 9. The Summary Graph
    \node[circle, draw, minimum size=0.7cm] (nA) at (15.2, 1.4) {A};
    \node[circle, draw, minimum size=0.7cm] (nB) at (16.2, 1.4) {B};
    \node[circle, draw, minimum size=0.7cm] (nC) at (15.7, 0.2) {C};

    \draw[->, thick, blue] (nA) to[bend left=15] (nB);
    \draw[->, thick, blue] (nB) to[bend left=15] (nA); % Cycle
    \draw[->, thick, blue] (nB) -- (nC);
    \draw[->, thick, blue] (nA) -- (nC);

    \node[align=center, font=\small, text=blue!80!black] at (15.7, -0.5) {Summary Graph\\$\mathcal{G}_{s}$};

    \end{tikzpicture}}
    \caption{\textbf{TRACE Methodology.} \textbf{Phase 1 (Training):} An autoregressive (AR) model (e.g., LM, RNN) is pretrained on a corpus of event sequences via next-token prediction to learn the process dynamics (\(P_\theta\)). \textbf{Phase 2 (Inference):} A single sequence $s$ is passed through the frozen model. We then estimate conditional mutual information (\textbf{Parallelized CMI} module) to prune non-causal edges and form the \textit{Sample Time Causal Graph} $\mathcal{G}_{t,s}$. Finally, this graph is projected onto the event types to recover the \textit{Summary Causal Graph} $\mathcal{G}_s$.}
    \label{fig:methodology_pipeline}
\end{figure*}
\paragraph{Single Stream (The Sample-Level Regime).} 
Crucially, our setting differs fundamentally (Fig.~\ref{fig:data_paradigm}), as we operate in the 'session-based' regime common to NLP and system logs: we observe many short, independent sequences over a massive vocabulary. %(often with \(|\mathcal{X}| > 1000\)
%from traditional multivariate time series causal discovery (Fig~\ref{fig:data_paradigm}) 
During inference, only one realization $s$ of the process is often available. Here, the goal is to understand if event type $A$ causes type $B$. This regime is significantly harder due to the sparsity of specific event pairs in high-dimensional vocabularies and the lack of independent trials during inference. 
While recent works attempt to interpret attention weights in Transformers as causal graphs \cite{tf_causalinterpretation_neurips_2023}, this is heavily criticized for being a poor proxy of causality~\cite{filippova-2020-elephant}. 
Similarly, \cite{math2025oneshot, math2025towards} focus exclusively on \textit{event-to-outcome} attribution (finding causes for outcomes), whereas TRACE recovers the complete \textit{event-to-event} causal relationship for a single sequence.

\section{Methodology}
An overview of TRACE can be found in Fig.~\ref{fig:methodology_pipeline}. A key advantage of TRACE is its architectural agnosticism. The framework decouples the density estimation (Phase 1) from the causal discovery (Phase 2). Consequently, TRACE can leverage any state-of-the-art autoregressive backbone (e.g., Transformers, Mamba, RNNs).
In this section, we describe the causal discovery objective. The notation used, and proofs can be found in Appendices \ref{appendix:notations} and \ref{appendix:proofs} respectively.

\subsection{Data-Generating Process}
We model the \textit{data-generating process} (DGP) as an ergodic non-stationary stochastic process \(\{X_t, t \in \mathbb{N}\}\) taking values in a finite, discrete alphabet $\mathcal{X}$. We assume that the process forms a high-order Markov chain with transition distribution \(P(X_{t}|X_{<t})\), where \(X_{<t} = X_{0:t-1} \triangleq \{X_0, \cdots, X_{t-1}\}\). This implies that the past \(X_{0:t-1}\) is not independent of the future \(X_{t+1}\) given the present \(X_{t}\).
%\begin{equation}\label{eq:semi_markov}
%   X_{t+1} \not\perp X_{<t} \mid X_{t}
%\end{equation}
\subsection{Sample-Time Causality}
Our analysis begins at the level of the specific observed sequence. To bridge the gap between the multinomial realization \(s = (x_0, \dots, x_L)\) and causal structure, we define the Binary Event Process \(\{E_t\}_{t=0}^L\) as \(E_t \triangleq \mathbb{1}_{X_t = x_t}\).
%\begin{equation}\label{eq:binary_var}
%\end{equation}

Here, \(E_t\) is a binary random variable representing the realization of the specific event type observed at time \(t\). This transformation allows us to represent the causal dependencies specific to this sequence as a DAG in Def.~\ref{def:itcg}.

\begin{definition}[Sample Time Causal Graph]\label{def:itcg}
    Let \(s\) be a realization of the process \(\{X_t\}\). The Sample Time Causal Graph \(\mathcal{G}_{t,s} = (\mathcal{T}, \mathcal{E}_{t})\) is a DAG where the nodes \(\mathcal{T} = \{0, \dots, L\}\) correspond to the time steps of the sequence. A directed edge \((t-k) \to t\) exists in \(\mathcal{E}_{t}\) if and only if the realization of the event at \(t-k\) is a cause of the event at \(t\).
    %\begin{equation}
    %    (t-k) \to t \in \mathcal{E}_{t} \iff P(E_t \mid do(E_{t-k}=1)) \neq P(E_t)
    %\end{equation}
\end{definition}
\noindent This graph \(\mathcal{G}_{t,s}\) (illustrated in Fig.~\ref{fig:methodology_pipeline}, right) represents the unrolled causal history of the sequence \(s\).

\subsection{Sample-Summary Graph}
Operators are often confronted with observing a single sequence during inference to understand generalizable rules (e.g., "Fire causes Smoke") rather than specific timestamps \((\mathcal{G}_{t,s})\). We adapt the terminology from~\cite{assaad_survey_ijcai_cd_time_series} regarding summary causal graph (SCG), which treats the unique event types in \(\mathcal{X}\) as the variables of interest. Therefore, for a single sequence \(s\), we aim to recover its \emph{Sample Summary Causal Graph}.
\begin{definition}[Sample Summary Causal Graph]\label{def:iscg}
    Let \(\mathcal{X}_s \subseteq \mathcal{X}\) be the set of unique event types in \(s\). The Sample SCG \(\mathcal{G}_{s} = (\mathcal{X}_s, \mathcal{E}_s)\) is the surjective projection of the Sample Graph \(\mathcal{G}_{t,s}\) onto \(\mathcal{X}_s\). Specifically, a type-level edge \(u \to v\) exists in \(\mathcal{G}_s\) if and only if it appears at least once in the sample graph:
    \begin{align}
        u \to v \in \mathcal{E}_s \iff \exists t, k \text{ s.t. } ((t-k) \to t) \in \notag\\ 
        \mathcal{E}_{t} \land x_{t-k} = u \land x_t = v \notag
    \end{align}
\end{definition}
\noindent Hence, our causal discovery objective is to identify the parents \(\text{Pa}_{\mathcal{G}_{t,s}}(t)\) for each node in the sequence (e.g., detecting Fire@\(t_1\) \(\rightarrow\) Smoke@\(t_2\)) and aggregate them to reconstruct the sample summary causal graph \(\mathcal{G}_{s}\) (e.g., Fire \(\rightarrow\) Smoke), illustrated in Fig.~\ref{fig:methodology_pipeline} (right).
\begin{remark}[Cyclicity in Summary Graphs]
    Consistent with the standard literature~\cite{assaad_survey_ijcai_cd_time_series}, the summary graph \(\mathcal{G}_{s}\) is an abstraction of the time-unrolled graph and is therefore permitted to contain cycles. %(e.g., feedback loops \(u \rightleftarrows v\)). %However, recovering a cycle requires observing a temporal recurrence of the events within the sequence \(s\). %In high-dimensional event logs (large \(|\mathcal{X}|\)), the probability of observing closed feedback loops within a short window is naturally lower than in low-dimensional settings, often rendering \(\mathcal{G}_s\) sparse or acyclic in practice (see Section~\ref{sec:application}).
\end{remark}

\subsection{From Prediction to Density Estimation}
We employ a pretrained AR model, denoted \(\text{Tf}_\theta\), taking a single sequence as trajectory \(x_{<t}\) to output the next token probabilities \(P_\theta\). We map it to the binary event process as:
\begin{equation}\label{eq:softmax_to_tf}
    P_\theta(E_t = 1 \mid X_{<t} = x_{<t}) = [\text{Softmax}(\text{Tf}_\theta(x_{<t}))]_{x_t}
\end{equation}
%Instead of predicting the DAG edges via amortized causal discovery \cite{cisca} [CISCA blbl], reusing the learned probabilistic dynamics enables us (1) Leverage a pretrained LM used on domain-specific data for other task (prediction, sentence classification, generation) (2) Applying standard causal discovery and probability theory tools
%\todo[inline]{FINISH}
This formulation allows us to utilize the high-dimensional joint distributions learned by the AR model for the structure learning of \(\mathcal{G}_{t,s}\). Therefore, in contrast to recent works probing causal reasoning abilities of AR models (e.g., language models) via prompting \cite{long2022can, kiciman2024causal}, we focus on extracting causal structure from the learned probabilistic dynamics.%, making TRACE applicable even to anonymized or domain-specific event logs where semantic priors are unavailable or unreliable.
%\subsection{The One-Shot Constraint}
%Operators are often confronted with observing a single or a very limited amount of realizations~\cite{math2025oneshot}. In this context, they would like to understand the causal relationships between events (Fig.~\ref{fig:trace_to_summary}). We thus tackle this one-shot setting. Given a single sequence \(s = (x_0, \dots, x_L)\), we try to recover the SCG with the edges involving event types that appear only in \(s\). We name it the \emph{sample Summary Causal Graph (OS-SCG)} Def.~\ref{def:one_shot_cg}. %Which we term the \textbf{sample Summary Graph}:
%\begin{definition}[sample Summary Causal Graph]\label{def:one_shot_cg}
%    Let \(\{X_t, t\in \mathbb{N}\}\) be a discrete-time stochastic process, \(\mathcal{G}_{sum} =  (\mathcal{X}, \mathcal{E})\) the associated summary causal graph and a realization \(s\). 
%    Let \(\mathcal{X}_s \subseteq \mathcal{X}\) be the set of unique event types present in \(s\) and \(\mathcal{E}_s \subseteq \mathcal{E}\) their edges. The sample Summary Causal Graph (OS-SCG) \(G_{s} = (\mathcal{X}_s, \mathcal{E}_s)\) of \(s\) is the subgraph of \(\mathcal{G}_{sum}\) induced by the observed events in \(s\).
%\end{definition}
\subsection{Assumptions}
To be able to infer causal relationships from observational data, we need to make several assumptions. In particular, \(\mathcal{G}_{t,s}\) implies (1) the \emph{Markov assumption}~\cite{pearl_1998_bn}, such that a variable is conditionally independent of its non-descendants given its parents, and (2) \emph{consistency through time} ~\cite{assaad_survey_ijcai_cd_time_series}. We give their limitations in Appendix~\ref{sec:limitation}.
Specifically, we also assume:

\begin{assumption}[Temporal Precedence]\label{assumption:temporal_precedence}
Given a perfectly recorded sequence of events \(((x_1, t_1), \cdots, (x_L, t_L))\) and monotonically increasing time of occurrence \(0 \leq t_1 \leq \cdots \leq t_L\), an event \(x_t\) is allowed to influence any subsequent event \(x_{t'}\) such that \(t < t'\). %Formally, a graph \(\mathcal{G} = (\mathcal{X}, \boldsymbol{E})\), \( (x_i, x_j) \in \boldsymbol{E} \implies \; i < j\)
\end{assumption}
%Without temporal ordering, a Markov Equivalence Class can only be identified. Temporal precedence collapses this to a unique DAG, analogous to how time-ordering resolves identifibiality in \cite{granger_causality} causality.
%This assumption is widely accepted in the literature \cite{cd_temporaldata_review} and allows us to remove ambiguity in causal directionality. 

%In other words, we allow the causal influence of event \(X_i\) on \(Y_j\) until the next event \(X_{i+1}\) is observed.
%We assume that for any unobserved latent confounder $U$ influencing $X_i$ and $X_j$, there exists a subset of observed events in the history $S_{<i}$ that acts as a set of perfect proxy variables for U
% High-Dimensional Proxy Sufficiency
\begin{assumption}[Causal Sufficiency]
\label{assumption:causal_sufficiency}
All relevant variables are observed, and there are no hidden confounders affecting the events.
\end{assumption}
%We posit that as the vocabulary size \(|\mathcal{X}| \rightarrow +\infty\), the probability of a "hidden" confounder having no observed proxy approaches zero. Such that, for example, in event logs, a latent failure (e.g., "Component degradation") is rarely silent; they typically emit precursor signatures (e.g., "Warning codes") prior to the target events. %We analyze the relaxation in Section~\ref{sec:limitation_causal_sufficency}.

%Nonetheless we provide a relaxation in Section~\ref{sec:limitation_causal_sufficency}.

%Thus the history \(X_{<i}\) becomes a sufficient statistic and does not reduce the entropy of the next event after observing the hidden confounder \(U\), rendering \(X_i \perp X_j | \boldsymbol{Z}, U \Leftrightarrow X_i \perp X_j | \boldsymbol{Z}\). We provide a relaxation in Section~\ref{sec:limitation_causal_sufficency}.
%As the granularity of the event log increases (e.g., from generic 'Error' to specific codes 'Error XXE, 'Latency High'), variables that were previously latent confounders become observed events in the history, thereby unblocking backdoor paths and allowing autoregressive models to capture the true data-generating process. Nonetheless, we provide an ablation on causal sufficiency in Section~\ref{sec:limitation_causal_sufficency}.
%Causal Sufficiency (A\ref{assumption:causal_sufficiency}) is more plausible than in low-dimensional settings.
\begin{assumption}[\(\epsilon-\)Oracle Model] \label{assumption:oracle}
We assume that the AR model \(\text{Tf}_\theta\), trained via maximum likelihood estimation on a dataset generated by the true distribution \(P\), has converged such that the Kullback-Leibler divergence is bounded by $\epsilon$:
\begin{equation}\label{eq:oracle_approx}
D_{KL}(P(X_t \mid X_{<t}) ~||~ P_\theta(X_t \mid X_{<t})) \leq \epsilon
\end{equation}
And that the total variation distance \(\delta(P, P_\theta) \leq \frac{1}{2}\). We thus define \(\text{Tf}_\theta\) as an \emph{$\epsilon$-Oracle} model. By Pinsker's inequality, this implies the total variation distance $\delta(P,P_\theta)$ to be bounded by $\sqrt{\epsilon/2}$.
\end{assumption}

\section{Universal CMI Estimation via Autoregressive Models}\label{sec:oneshotmarkov}
We describe how any AR model implicitly estimates the conditional probabilities required for CI-testing via a single primitive: the conditional mutual information.
%we can derive a CI-test to construct the Sample Time Causal Graph \(\mathcal{G}_{t,s}\). %from the conditional mutual information.

\subsection{Conditional Mutual Information}
In a sequence \(s = (x_0, \cdots, x_L)\), we would like to assess how much additional information the realization \(x_t\) \((E_t = 1)\) provides about the next event occurrence \(X_{t+1} = x_{t+1}\) when we already know the past observation \(X_{<t}\). We essentially try to answer whether:
\begin{align*}
    D_{KL}(P(E_{t+1}|E_t, X_{<t})\|P(E_{t+1}|X_{<t})) = 0  
\end{align*}
This divergence is akin to Information Gain \(I_G\) \cite{quinlan:induction} which characterize the remaining uncertainty in \(E_{t+1}\) once we know the realization \(e_t\) conditioned on \(x_{<t}\):
%Using the binary event process \(\{E_t\}^L_{t=0}\) we get:
%\begin{align}\label{eq:info_gain}
%    I_G(X_{i+1}, x_i|z) \triangleq D_{KL}(P(X_{i+1}|x_i, z)|| P(X_{i+1}|z))\notag
%\end{align}
%Which 
%It is equal by definition : 
\begin{align}
    I_G(E_{t+1}, e_t|x_{<t}) &= D_{KL}(P(E_{t+1}|e_t, x_{<t})\|P(E_{t+1}|x_{<t}))\notag\\
     &= H(E_{t+1}|x_{<t}) - H(E_{t+1}|e_t, x_{<t})\label{eq:info_gain} 
\end{align}

\noindent Where \(H\) denotes the entropy \cite{cover1999elements}.
\noindent More generally, we can access the conditional independence (Def.~\ref{def:conditional_independence}) between event \(E_t\) and event \(E_{t+1}\) using the conditional mutual information (CMI) which is simply the expected value over \(e_i, x_{<t}\) of the information gain \(I_G(E_{t+1}, e_t|x_{<t})\):
\begin{align}
    I(E_{t+1}, E_t|X_{<t}) &\triangleq H(E_{t+1}|X_{<t}) - H(E_{t+1}|E_t, X_{<t}) \notag \\
    &= \mathbb{E}_{e_t, x_{<t}}[I_{G}(E_{t+1}, e_t|x_{<t})])\label{eq:cmi_theorique}
\end{align}
We thus can deduce the following CI-test, such that: 
\begin{equation}\label{eq:ci_test_adjacent}
     E_{t+1} \not\perp E_t | X_{<t} \Leftrightarrow I(E_{t+1}, E_t|X_{<t}) > 0
\end{equation}
\begin{remark}
   Crucially, we condition on the full trajectory \(X_{<t}\) rather than the coarsened binary history \(E_{<t}\) to test for all observed events in the history, thereby blocking potential back-door paths (confounders) that would otherwise remain hidden in the binary projection.
\end{remark}
%\todo[inline]{Explain that having binary events, although we need the softmax from the output of LM, we then gather only the binary probabilties which makes computation feasible, especially for large vocabularies}

\begin{figure*}[t]
    \centering
\begin{tikzpicture}[
    font=\sffamily,
    >=Stealth,
    tensor/.style={matrix of nodes, nodes={draw, minimum size=6mm, anchor=center, font=\scriptsize}, column sep=-\pgflinewidth, row sep=-\pgflinewidth, inner sep=0pt},
    obs/.style={fill=blue!15},
    inter/.style={fill=red!15, pattern=north east lines, pattern color=red!30},
    context/.style={fill=green!15},
    gray_out/.style={fill=gray!20},
    label_text/.style={font=\bfseries\small, align=center}
]

% --- 1. INPUTS ---
\node[label_text] (label_seq) at (-0.3, 4.4) {1. Observed Seq $\mathbf{S}$};
\matrix[tensor] (seq) at (0, 3.8) {
    |[context]| $x^{(l)}_{0:c}$ & |[obs]| $x_1$ & |[obs]| $x_2$ & |[obs]| $x_3$ \\
};

\node[label_text] (label_noise) at (0, 3.2) {2. Noise Samples $\mathbf{M}^{(l)}$};
\matrix[tensor] (noise) at (0, 2.6) {
 |[inter]| $m^{(l)}_2$ & |[inter]| $m^{(l)}_3$  \\
%|[inter]| $m^{(l)}_2$ & |[inter]| $m^{(l)}_3$ \\
};

% --- 2. SINGLE STAIRCASE TENSOR ---
\node[label_text] (label_broadcast) at (4.8, 5.4) {3. Parallel Input Construction};

% The 4x4 Staircase
% Row 0: Full Noise (Baseline for x1)
% Row 1: x1 fixed (Intervention for x1 / Baseline for x2)
% Row 2: x1, x2 fixed (Intervention for x2 / Baseline for x3)
% Row 3: Full Observation (Intervention for x3)

% B. INTERVENTIONS (P_do Input)
%\node[anchor=west, font=\scriptsize\bfseries, color=grey!60!black] at (2.5, 4.5) {$\mathbf{X}_{do}$};
\matrix[tensor] (staircase) at (4.5, 3.2) {
  %  |[context]| $x^{(l)}_{0:c}$ & |[inter]| $m^{(l)}_1$ & |[inter]| $m^{(l)}_2$ & |[inter]| $m^{(l)}_3$ \\
    |[context]| $x^{(l)}_{0:c}$ & |[obs]| $x_1$ & |[inter]| $m^{(l)}_2$ & |[inter]| $m^{(l)}_3$ \\
    |[context]| $x^{(l)}_{0:c}$ & |[obs]| $x_1$ & |[obs]| $x_2$ & |[inter]| $m^{(l)}_3$ \\
    |[context]| $x^{(l)}_{0:c}$ & |[obs]| $x_1$ & |[obs]| $x_2$ & |[obs]| $x_3$ \\
};

% Stacking effect for N particles
\begin{scope}[on background layer]
    \draw[fill=white, draw=black!50] ($(staircase.north west)+(0.2,0.2)$) rectangle ($(staircase.south east)+(0.2,0.2)$);
    \draw[fill=white, draw=black!50] ($(staircase.north west)+(0.1,0.1)$) rectangle ($(staircase.south east)+(0.1,0.1)$);
    \node at (6, 4.8) [text=black!50] {\scriptsize $\times N$ Particles};
\end{scope}

% Arrows from Inputs to Staircase
% Blue (Obs) to x cells
\draw[->, thick, blue] (seq.east) -- ++(0.3,0) |- ($(staircase.west)+(0, -0.3)$); % Point roughly to x1 area
\draw[->, thick, blue] (seq.east) -- ++(0.3,0) |- ($(staircase.west)+(0, -0.9)$); % Point roughly to x2 area

% Red (Noise) to m cells
\draw[->, thick, red] (noise.east) -- ++(0.5,0) |- ($(staircase.west)+(0, 0.9)$); % Point roughly to top row
\draw[->, thick, red] (noise.east) -- ++(0.5,0) |- ($(staircase.west)+(0, 0.3)$); 

% --- 3. MODEL INFERENCE ---
\node[label_text] (label_inference) at (8, 5.4) {4. Inference};
\node[draw, fill=orange!10, minimum width=1.5cm, minimum height=2.4cm, rounded corners, align=center] (model) at (8, 3.2) {\textbf{Tf}$_\theta$ \\ \scriptsize AR Model};

\draw[->, thick, double, black!70] (staircase.east) -- (model.west)
    node[midway, below, font=\tiny\bfseries] {$\mathbf{X}_{do}$};

% --- 4. OUTPUT TENSOR (GREY) ---
\node[label_text] at (10.8, 5.4) {5. Output Tensor};

% Draw Grey Tensor representing P_do (Raw)
\matrix[tensor] (out_tensor) at (10.8, 3.2) {
    |[gray_out]| $p_0$ & |[gray_out]| $\dots$ & |[gray_out]| $\dots$ \\
    |[gray_out]| $p_1$ & |[gray_out]| $\dots$ & |[gray_out]| $\dots$ \\
    |[gray_out]| $p_2$ & |[gray_out]| $\dots$ & |[gray_out]| $\dots$ \\
    |[gray_out]| $p_3$ & |[gray_out]| $\dots$ & |[gray_out]| $\dots$ \\
};

% --- 5. SHIFTING & CMI ---
\node[label_text] (label_cmi) at (13.6, 5.4) {6. Shift \& Compare};

\matrix[tensor] (cmi_mat) at (13.8, 3.2) {
    |[fill=white]| 0 & |[fill=blue!60]| $I_{1 \to 2}$ & |[fill=blue!10]| $I_{1 \to 3}$ \\
    |[fill=white]| 0 & |[fill=white]| 0 & |[fill=blue!80]| $I_{2 \to 3}$ \\
    |[fill=white]| 0 & |[fill=white]| 0 & |[fill=white]| 0 \\
};

\draw[->, thick, double, black!70] (model.east) -- (out_tensor.west)
    node[midway, below, font=\tiny\bfseries] {$\mathbf{P}_{raw}$};

% SHIFT VISUALIZATION
% Bracket for Baseline (Rows 0-2) -> P_obs (Blue)
\draw[decoration={brace, mirror, amplitude=5pt}, decorate, thick, blue] 
    ($(out_tensor.north west)+(-0.2,-0.1)$) -- ($(out_tensor.south west)+(-0.2, 0.6)$) 
    node[midway, left=3pt, font=\scriptsize, align=right] {Shift \\ $[:-1]$};

% Bracket for Intervention (Rows 1-3) -> P_do (Red)
\draw[decoration={brace, amplitude=5pt}, decorate, thick, red] 
    ($(out_tensor.north east)+(0.2,-0.6)$) -- ($(out_tensor.south east)+(0.2, 0.1)$) 
    node[midway, right=3pt, font=\scriptsize, align=left] {Shift \\ $[1:]$};

% Arrows to CMI
% Blue Arrow (Top part)
\draw[->, thick, blue] ($(out_tensor.east)+(0, 0.5)$) -- (cmi_mat.west|-0, 3.8) 
    node[midway, above, font=\tiny\bfseries] {$\mathbf{P}_{base}$};

% Red Arrow (Bottom part)
\draw[->, thick, red] ($(out_tensor.east)+(0, -0.5)$) -- (cmi_mat.west|-0, 2.6)
    node[midway, below, font=\tiny\bfseries] {$\mathbf{P}_{do}$};

% Annotate Calculation
\node[above=0.3cm of cmi_mat, align=center, font=\scriptsize] (eq) {
    $D_{KL}(\mathbf{P}_{base} || \mathbf{P}_{do})$ \\
    Averaged across $N$
};

% --- LEGEND ---
% Moved to Bottom Right
\node[draw, anchor=south east, fill=white] at (2.5, 4.6) {
    \scriptsize
    \begin{tabular}{cl}
         \tikz\node[draw, fill=green!15, inner sep=2pt] {}; & Fixed Context $x^{(l)}_{0:c}$ \\
         \tikz\node[draw, fill=blue!15, inner sep=2pt] {}; & Observed History $x_{\le j}$ \\
         \tikz\node[draw, fill=red!15, pattern=north east lines, pattern color=red!30, inner sep=2pt] {}; & Intervened Mediators $m^{(l)}_{j+1:i-1}$ \\
    \end{tabular}
};

\end{tikzpicture}
\caption{\textbf{Overview of TRACE Parallel CI-tests.} We construct a single broadcasted tensor $\mathbf{X}_{do}$ where each row $j$ incrementally fixes the history $x_{\le j}$ while randomizing the future (staircase pattern). The model processes this tensor in parallel to produce raw probabilities $\mathbf{P}_{raw}$ (grey). We then compute the Causal Mutual Information by comparing adjacent rows: the distribution at row $j-1$ serves as the baseline ($\mathbf{P}_{base}$, blue) for the intervention at row $j$ ($\mathbf{P}_{do}$, red).}
\label{fig:trace_diagram}
\end{figure*}
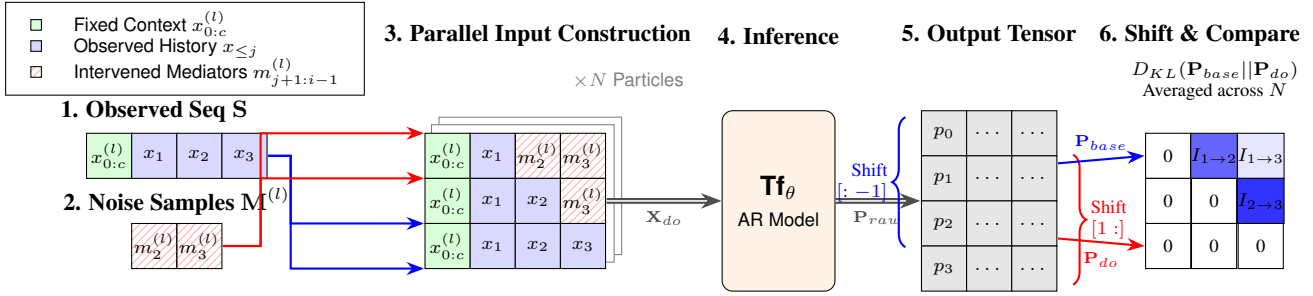
\subsection{Estimation and Approximation Error}
The previous Eq.~\ref{eq:cmi_theorique} involves an expectation over the distribution of histories \(x_{<t}\). Since the true distribution $P$ is unknown, we utilize the \(\epsilon\)-Oracle model $\text{Tf}_\theta$ as a proxy to simulate $N$ i.i.d. history particles $\{x^{(l)}_{<t}\}_{l=1}^N \sim P_\theta(X_{<t})$ analogously to greedy search~\cite{Holtzman2020The}. We define the empirical estimator $\hat{I}_N$ of the CMI as the Monte Carlo estimation:
\small{
\begin{equation}\label{eq:cmi_estimation_naive_monte_carlo}\hat{I}_N(E_{t+1}; E_t \mid X_{<t})= \frac{1}{N} \sum_{l=1}^N \underbrace{\mathbb{E}_{e_t \sim P_\theta} [I_G(E_{t+1}, e_t \mid x^{(l)}_{<t})]}_{f_\theta(x^{(l)}_{<t})}
\end{equation}
}

\begin{proposition}[Convergence to the $\epsilon$-Proxy]\label{prop:consistency_cmi}
The estimator $\hat{I}_N$ is a consistent estimator of the \(\epsilon\)-oracle induced CMI denoted as $I_\theta$. By the Strong Law of Large Numbers, as $N \to \infty$:
$$\hat{I}_N \xrightarrow[N \to +\infty]{\text{a.s.}} I_\theta(E_{t+1}; E_t \mid X_{<t})$$
\end{proposition}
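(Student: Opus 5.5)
The plan is to recognise $\hat{I}_N$ as a sample mean of i.i.d.\ random variables $f_\theta(x^{(l)}_{<t})$, identify the common mean as $I_\theta$, and check integrability so that Kolmogorov's Strong Law of Large Numbers applies.

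\emph{Step 1: the target.} First we make precise what $I_\theta(E_{t+1};E_t\mid X_{<t})$ means. It is the CMI of Eq.~\eqref{eq:cmi_theorique}, computed with every distribution replaced by its model counterpart: histories drawn from $P_\theta(X_{<t})$, $e_t\sim P_\theta(E_t\mid x_{<t})$, and the conditionals $P_\theta(E_{t+1}\mid e_t,x_{<t})$ and $P_\theta(E_{t+1}\mid x_{<t})$ read off via Eq.~\eqref{eq:softmax_to_tf}. By the tower property applied to the second line of Eq.~\eqref{eq:cmi_theorique}, this gives
\[
I_\theta(E_{t+1};E_t\mid X_{<t})=\mathbb{E}_{x_{<t}\sim P_\theta}\big[f_\theta(x_{<t})\big],
\]
where $f_\theta$ is exactly the summand of Eq.~\eqref{eq:cmi_estimation_naive_monte_carlo}.

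\emph{Step 2: i.i.d.\ structure.} The particles $x^{(l)}_{<t}$ are drawn i.i.d.\ from $P_\theta(X_{<t})$ by ancestral sampling from the frozen model. Since $f_\theta$ is a fixed deterministic measurable map (the alphabet is finite), the variables $Y_l:=f_\theta(x^{(l)}_{<t})$ are also i.i.d., with common mean $I_\theta$ from Step 1.

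\emph{Step 3: integrability.} This is the only non-trivial point. Each $f_\theta(x_{<t})$ is a conditional mutual information between two binary variables for a fixed history, namely $H_\theta(E_{t+1}\mid x_{<t})-H_\theta(E_{t+1}\mid E_t,x_{<t})$. It is therefore nonnegative and bounded above by $H_\theta(E_{t+1}\mid x_{<t})\le\log 2$. Hence $0\le Y_l\le \log 2$, so $\mathbb{E}|Y_l|<\infty$.

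This bound requires the conditionals to be well defined. If $P_\theta(E_t=e\mid x_{<t})=0$ for some $e$, the corresponding term in the inner expectation has weight zero, so we adopt the convention $0\cdot(\cdot)=0$. The softmax in Eq.~\eqref{eq:softmax_to_tf} in fact gives strictly positive probabilities, so the issue does not arise in practice.

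\emph{Step 4: conclusion.} Kolmogorov's SLLN gives $\frac1N\sum_{l=1}^N Y_l\to\mathbb{E}[Y_1]=I_\theta$ almost surely, which is the claim. Consistency in probability follows immediately, and Hoeffding's inequality yields an $O(\sqrt{\log(1/\delta)/N})$ rate as a bonus.

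We note that this result concerns only the $\epsilon$-proxy $I_\theta$. Relating $I_\theta$ to the true CMI $I$ is a separate matter, to be handled via Assumption~\ref{assumption:oracle}, and is not needed here.
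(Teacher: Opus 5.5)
Your proposal is correct and follows the same route as the paper: you write $\hat{I}_N$ as an empirical mean of i.i.d.\ terms $f_\theta(x^{(l)}_{<t})$, bound each term in $[0,\log 2]$ via the difference of conditional entropies of the binary $E_{t+1}$, and invoke the Strong Law of Large Numbers. Your version is slightly tidier in two places: Kolmogorov's SLLN needs only integrability, whereas the paper argues through finite variance, and you bound by $H_\theta(E_{t+1}\mid x_{<t})\le\log 2$ rather than by the paper's intermediate $H_\theta(E_{t+1})$, which, read literally as an unconditional entropy, need not dominate the entropy given a specific history.
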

\textit{Proof Sketch.}
It follows directly from the Strong Law of Large Numbers (SLLN), as the samples are drawn i.i.d. from $P_\theta$ and the information gain term \(f_\theta(x^{(l)}_{<t})\) is bounded by $\log 2$, ensuring finite variance. 

Since we know that the estimator \(\hat{I}_N\) converges to the approximated CMI as \(I_\theta\), we derive an approximation error bound to characterize the remaining noise induced by the imperfect model. 

\begin{theorem}[Total Error Bound in the $\epsilon$-Regime]\label{thm:total_error_bound}
Let $\hat{I}_N$ be the Monte Carlo estimator of the approximated CMI \(I_\theta\). Assuming the AR model $P_\theta$ approximates the true DGP $P$ as an \(\epsilon\)-oracle model (A\ref{assumption:oracle}), the asymptotic error of the true CMI \(I\) is bounded by:
\begin{small}
\begin{equation*}
\limsup_{N \to \infty} | I - \hat{I}_N  | \leq 2 \sqrt{\epsilon/2} \ln(2) + 2(1+\sqrt{\epsilon/2} )h_b\left(\frac{\sqrt{\epsilon/2} }{1+\sqrt{\epsilon/2} }\right)
\end{equation*}
\end{small}
\noindent where $h_b(\cdot)$ is the binary entropy function $h_b(p) = -p \ln p - (1-p) \ln (1-p)$.
\end{theorem}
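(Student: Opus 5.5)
We split the error with the triangle inequality,
\[
|I - \hat{I}_N| \le |I - I_\theta| + |I_\theta - \hat{I}_N| .
\]
By Proposition~\ref{prop:consistency_cmi}, $\hat{I}_N \to I_\theta$ almost surely. Hence $\limsup_{N\to\infty} |I_\theta - \hat{I}_N| = 0$, and the whole task is a deterministic bound on $|I - I_\theta|$. Writing $\delta \triangleq \sqrt{\epsilon/2}$, Assumption~\ref{assumption:oracle} and Pinsker's inequality give $\delta(P,P_\theta)\le\delta$, with $\delta\le 1/2$ guaranteed by the same assumption. We also write
\[
I = H(E_{t+1}\mid X_{<t}) - H(E_{t+1}\mid E_t, X_{<t}),
\]
and $I_\theta$ analogously under $P_\theta$. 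Then
\[
|I - I_\theta| \le \bigl|H_P(E_{t+1}\mid X_{<t}) - H_{P_\theta}(E_{t+1}\mid X_{<t})\bigr| + \bigl|H_P(E_{t+1}\mid E_t,X_{<t}) - H_{P_\theta}(E_{t+1}\mid E_t,X_{<t})\bigr| .
\]
Each of these two terms should contribute exactly half of the claimed bound, namely $\delta\ln 2 + (1+\delta)h_b\bigl(\delta/(1+\delta)\bigr)$.

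This half-bound is exactly the Alicki--Fannes--Winter continuity bound for conditional entropy, specialized to a binary target. For joint distributions $P_{AB}, Q_{AB}$ with $\delta(P_{AB},Q_{AB})\le\delta$, it states
\[
|H_P(A\mid B) - H_Q(A\mid B)| \le \delta \ln|\mathcal{A}| + (1+\delta)\,h_b\!\left(\tfrac{\delta}{1+\delta}\right),
\]
and here $|\mathcal{A}|=2$ since $E_{t+1}$ is binary. We apply it twice: once with $A=E_{t+1}$, $B=X_{<t}$, and once with $B=(E_t,X_{<t})$. The crucial feature is that the bound does not depend on the size of the conditioning alphabet, which is astronomically large here. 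Consequently $|\mathcal{X}|$ and $t$ never enter the result, which matches the statement.

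The main obstacle is justifying that the relevant \emph{joint} laws, over $(E_{t+1},E_t,X_{<t})$ and its marginal over $(E_{t+1},X_{<t})$, are within total variation $\delta$. Assumption~\ref{assumption:oracle} only controls the next-token conditional $P(X_t\mid X_{<t})$. I would read it, as the assumption's own phrasing ``$\delta(P,P_\theta)$'' suggests, as a bound on the total variation between the process laws that generate the histories and the next events. Marginalization (data processing) then transfers the bound to every coordinate projection, since TV is non-increasing under measurable maps such as $X\mapsto \mathbb{1}_{X=x}$. If instead only per-step conditionals are controlled, I would first try to pass to the joint via the chain rule for KL. The honest alternative is to state the theorem for the conditioning distribution fixed to the true one, so that only the predictive conditional differs; AFW still applies because the mixture over $B$ is shared. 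Summing the two AFW bounds then gives exactly
\[
2\delta\ln 2 + 2(1+\delta)\,h_b\!\left(\tfrac{\delta}{1+\delta}\right),
\]
and taking $\limsup$ concludes the proof.
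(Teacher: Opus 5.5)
Your proposal is correct and is essentially the paper's own proof: the triangle inequality separates Monte Carlo error from approximation bias, Proposition~\ref{prop:consistency_cmi} removes the former, and the classical-conditioning Alicki--Fannes--Winter bound with a binary target is applied to each of the two conditional-entropy differences, with $\delta=\sqrt{\epsilon/2}$ from Pinsker. The joint-versus-conditional total-variation issue you raise is real, and the paper does not resolve it either: it appeals to ``teacher forcing'' and uses the per-step conditional distance even for the term conditioned on the additional event. So your first reading---a total-variation bound on the process laws, pushed to $(E_{t+1},E_t,X_{<t})$ by data processing---is the cleanest way to make the argument rigorous, whereas the KL chain-rule fallback would inflate $\delta$ and not reproduce the stated constant.
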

\textit{(Proof Sketch)} We use the Alicki-Fannes-Winter \cite{Winter_2016} inequality for the difference between conditional entropies of two distributions with a small total variation distance \(\delta(P, P_\theta) \) corresponding to an \(\epsilon\)-oracle model. 

Crucially, this bound implies that as the AR model approaches the true distribution ($\epsilon \to 0$), the causal identification error vanishes. 
Assumption violations are analyzed in Appendix~\ref{sec:limitation}.
%\begin{proposition}[CMI Error Bound for an \(\epsilon\)-oracle model]\label{prop:bound_cmi}
%    Let $E_t$ be the binary event associated with \(\{E_t\}_{t\geq0}\) and $X_{<t}$ be the sequence context. Assuming the AR model $P_\theta$ approximates the true DGP $P$ as an \(\epsilon\)-oracle model (A\ref{assumption:oracle}), the approximation error in the CMI as \(I(E_t, E_{t+1}|X_{<t})\) is bounded by:
%   $$ |I - I_\theta| \leq 2 \sqrt{\epsilon/2} \ln(2) + 2(1+\sqrt{\epsilon/2} )h_b\left(\frac{\sqrt{\epsilon/2} }{1+\sqrt{\epsilon/2}}\right)$$
%where $h_b(\cdot)$ is the binary entropy function $h_b(p) = -p \ln p - (1-p) \ln (1-p)$
%\end{proposition}
%\paragraph{Monte-Carlo Estimation.}
%Hence, the total error upper bound is: 
%$$| \hat{I}_N - I | \leq \underbrace{| \hat{I}_N - I_\theta |}_{\text{Monte Carlo Variance}} + \underbrace{| I_\theta - I |}_{\text{Model Bias (Epsilon)}}$$
\subsection{Identifiability}
Standard Faithfulness~\cite{constrainct_based_cd} relies on Oracle CI-tests returning exact zeros, an unrealistic premise under finite samples and imperfect density estimation. We instead adopt a variant of Strong Faithfulness~\cite{uhler2013geometry}, which requires valid causal associations to exceed a detection threshold $\tau$ (i.e., $I > \tau$). Crucially, in our setting, this lower bound $\tau$ is by the estimator's bias (Theorem~\ref{thm:total_error_bound}) (if we assume \(N \rightarrow +\infty\)).
%In real world scenario (finite samples, imperfect models), the standard Faithfulness~\cite{constrainct_based_cd} assumption, which use Oracle CI-test, remains unrealistic as the estimated CMI is never exactly zero. On the other hand, Strong Faithfulness~\cite{uhler2013geometry} wish to be more flexible. If the association is strong enough to be detected (\(I > \tau\)),
%. In our settings, the threshold \(\tau\) is not just sampling variance 
%This is a relaxation of the unrealistic assumption that CI-tests are always faithful to the graph structure~\cite{constrainct_based_cd}.
%(e.g., \(\lambda \approx\frac{1}{\sqrt{N}}\) for PC~\cite{pc_sample_strong_faithfulness}) but also model bias (Theorem.~\ref{thm:total_error_bound}).
%We now establish the conditions under which the causal structure $\mathcal{G}_{t,s}$ is recoverable from the approximate language model $P_\theta$. 
%Hence, to guarantee identifiability, we must assume that the true causal edges manifest a signal stronger than the model's approximation noise. %This is a relaxation of the unrealistic assumption that CI-tests are always faithful to the graph structure~\cite{constrainct_based_cd}.

\begin{definition}[$\epsilon$-Strong Faithfulness]\label{def:strong_faithfulness}
Let $\tau_\epsilon$ be the asymptotic approximation error bound of the model (Theorem~\ref{thm:total_error_bound}). A distribution $P$ is \textbf{$\epsilon$-Strongly Faithful} to a causal graph $\mathcal{G}$ with respect to the estimator $P_\theta$ if, for every active edge $E_{t} \to E_t' \; \text{with} \; t < t'$, the \textbf{true} CMI satisfies:
\begin{equation}
I(E_t; E_{t'} \mid X_{<t}) > 2\tau_\epsilon
\end{equation}
\end{definition}
Thus, identifiability is guaranteed provided the true causal signal dominates the model's approximation error (Appendix~\ref{appendix:validation_of_epsilon_faithfulness}). 
\begin{lemma}[Identifiability of the Sample Time Causal Graph]\label{lemma:identifiability}Let $\hat{I}_N$ be the consistent Monte Carlo estimator of the CMI derived from the $\epsilon$-Oracle model $P_\theta$. Under the assumption of $\epsilon$-Strong Faithfulness (Def.~\ref{def:strong_faithfulness}), the sample time causal graph \(\mathcal{G}_{t,s}\) is identifiable with probability 1 as $N \to \infty$.
\end{lemma}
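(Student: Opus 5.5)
The plan is to run a standard threshold-separation argument on top of Proposition~\ref{prop:consistency_cmi} and Theorem~\ref{thm:total_error_bound}. The decision rule is: for each candidate pair $(t,t')$ with $t<t'$, which is the only admissible direction by Temporal Precedence (A\ref{assumption:temporal_precedence}), declare an edge $t \to t'$ iff $\hat{I}_N(E_{t'};E_t\mid X_{<t}) > \tau_\epsilon$, where $\tau_\epsilon = 2\sqrt{\epsilon/2}\ln 2 + 2(1+\sqrt{\epsilon/2})h_b\big(\tfrac{\sqrt{\epsilon/2}}{1+\sqrt{\epsilon/2}}\big)$ is the bound of Theorem~\ref{thm:total_error_bound}. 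I will show this rule reproduces $\mathcal{E}_t$ exactly, almost surely in the limit.

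First, the error decomposes as $|I-\hat{I}_N| \le |I - I_\theta| + |I_\theta - \hat{I}_N|$. By Proposition~\ref{prop:consistency_cmi} the second term tends to $0$ almost surely, so $\hat{I}_N \to I_\theta$ a.s., and the first term is at most $\tau_\epsilon$ by the bias part of Theorem~\ref{thm:total_error_bound}. Since there are only finitely many pairs ($L$ is fixed), the almost-sure events can be intersected. Hence there is a probability-one event on which, for every pair simultaneously, $\limsup_N |I-\hat{I}_N| \le \tau_\epsilon$.

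Next, I separate true edges from non-edges. For an active edge, $\epsilon$-Strong Faithfulness (Def.~\ref{def:strong_faithfulness}) gives $I > 2\tau_\epsilon$, so $\liminf_N \hat{I}_N \ge I - \tau_\epsilon > \tau_\epsilon$, and the edge is eventually retained. For a non-edge, the Markov assumption together with Causal Sufficiency (A\ref{assumption:causal_sufficiency}) implies $E_{t'} \perp E_t \mid X_{<t}$, because conditioning on the full history blocks all back-door paths. Hence $I=0$, $I_\theta \le \tau_\epsilon$, and the limit $\hat{I}_N \to I_\theta$ lies below or at the threshold. Taking the union over finitely many pairs then yields exact recovery of $\mathcal{G}_{t,s}$ with probability~1 as $N\to\infty$.

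The main obstacle is the boundary case $I_\theta = \tau_\epsilon$ exactly for a non-edge, where the strict inequality could fail infinitely often. My first attempt is to use the non-strict rule ``$\ge$'' for rejection, which the limit handles. Failing that, I would set the threshold at any $\tau \in (\tau_\epsilon, I_{\min}-\tau_\epsilon)$, a nonempty interval precisely because of the factor $2$ in Def.~\ref{def:strong_faithfulness}. A secondary subtlety is that Theorem~\ref{thm:total_error_bound} is stated for adjacent pairs $(t,t+1)$. I would argue that the Alicki--Fannes--Winter step applies verbatim to any pair $(t,t')$, since the variables remain binary and the conditioning history only enters through the total variation bound of A\ref{assumption:oracle}.
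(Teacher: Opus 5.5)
Your skeleton is the paper's own two-case separation (active edge: $I>2\tau_\epsilon \Rightarrow \hat{I}_N>\tau_\epsilon$; absent edge: $I=0 \Rightarrow \hat{I}_N\le\tau_\epsilon$). You handle the limit more carefully than the paper, which uses $\limsup_N|\hat{I}_N-I|\le\tau_\epsilon$ as though it held at every finite $N$. On the boundary case, only your second fix is sound. If $I_\theta=\tau_\epsilon$, the sign of $\hat{I}_N-I_\theta$ changes infinitely often almost surely (a mean-zero random walk with positive variance oscillates), so a fixed rule applied along the sequence never settles. You should either decide on the a.s.\ limit itself or use a threshold strictly inside $(\tau_\epsilon, I_{\min}-\tau_\epsilon)$. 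Your worry about adjacent pairs is moot, since the appendix proof of Theorem~\ref{thm:total_error_bound} is already written for a general pair $(t,t')$.

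The genuine gap is the non-edge step when $t'>t+1$. The edges of $\mathcal{G}_{t,s}$ are meant to be direct links: the paper introduces mediator control precisely to isolate a controlled direct effect, and its soundness proof targets parent sets. The Markov property then gives independence of $E_{t'}$ and $E_t$ given $\mathrm{Pa}(t')$, or given all of $X_{<t'}$ except $X_t$, but not given $X_{<t}$. Conditioning on $X_{<t}$ blocks back-door paths into $E_t$. It leaves open the directed paths $t\to t+1\to\cdots\to t'$ through the mediators $X_{t+1:t'-1}$, which are not in the conditioning set. For an indirect ancestor with no direct edge, generically $I(E_{t'};E_t\mid X_{<t})>0$, nothing keeps it below $\tau_\epsilon$, and your rule outputs a spurious edge.

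Your argument is correct for $t'=t+1$. It would also be correct for every pair under a total-effect reading of Def.~\ref{def:itcg}, since a non-ancestor $t$ is d-separated from $t'$ by $\{0,\dots,t-1\}$; but that is not the parent graph the rest of the paper targets. The paper's proof does not repair this either, as it simply posits $I=0$ for absent edges. Its framework does, however, handle lags through the randomized mediator intervention of Def.~\ref{def:do_in_sequences_expected} and the lagged information gain of Def.~\ref{def:lagged_ig}.

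To close the gap, threshold that lagged quantity. When $t\notin\mathrm{Pa}(t')$, the ordered Markov property makes $P(E_{t'}\mid \mathbf{m}, e_t, x_{<t})$ independent of $e_t$ for every mediator value $\mathbf{m}$, so the lagged quantity vanishes exactly on non-edges. You then need $\epsilon$-Strong Faithfulness and the $\tau_\epsilon$ error bound stated for that same quantity rather than for $I(E_{t'};E_t\mid X_{<t})$. The error bound now also requires the $\epsilon$-oracle guarantee to hold at the uniformly sampled, possibly off-distribution, mediator values.
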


\subsection{Lagged Effects via Simulated Interventions}
To evaluate the lagged effects of an event \(E_{t}\) on \(E_{t'}\) with \(t< t'\), we control for the intermediate events, so-called \textit{mediators} \(\boldsymbol{M} = X_{t+1:t'-1}\) by simulating a Controlled Direct Effect (CDE) \cite{pearl_2009} of \(E_t\) on \(E_{t'}\).

\begin{definition}[Randomized Interventional Do-Operator]\label{def:do_in_sequences_expected}
Let \(\boldsymbol{M} = X_{t+1:t'-1} \subset \mathcal{X}\) be the set of intermediate events between cause \(E_t\) and effect \(E_{t'}\). We define the intervention \(do(\boldsymbol{M})\) as the expectation over counterfactual realizations sampled uniformly over \(|\mathcal{X}|\)) and averaged for \(N\) particles \(\mathbf{m}^{(l)}\) as with the Monte Carlo estimation (Eq.~\ref{eq:cmi_estimation_naive_monte_carlo}) such as:
\begin{small}
\begin{align}
  P(E_{t'}|do(\boldsymbol{M}), X_{<t}) &\triangleq \mathbb{E}_{\boldsymbol{M} \sim \mathcal{U}} \left[P(E_{t'}|\boldsymbol{M}, X_{<t}) \right] \\
    &\approx \frac{1}{N} \sum_{l=1}^{N} P(E_{t'} \mid \mathbf{m}^{(l)},  X_{<t})\notag
\end{align}
\end{small}
\end{definition}
\begin{remark}
    This effectively marginalizes out the intermediate causal mechanisms only if we assume that there are no hidden confounders (Assumption~\ref{assumption:causal_sufficiency}).
\end{remark}
Using the previous definition, we modify Eq.~\eqref{eq:info_gain} to detect lagged information gain from \(E_t\) to \(E_t'\), namely \(I^\mathcal{L}_G\):

\begin{definition}[Lagged Information Gain]\label{def:lagged_ig}
    Let \(E_t\) be the cause, \(E_{t'}\) be the effect (\(t < t'\)) and \(\boldsymbol{M} = X_{t+1:t'-1}\) the set of intermediate events. The Lagged Information Gain \(I^{\mathcal{L}}_G\) is defined:
\begin{small}
\begin{align}\label{eq:lagged_info_gain}
I^{\mathcal{L}}_{G}(E_{t'}; e_t \mid x_{<t}) \triangleq D_{\mathrm{KL}}\Big( P(E_{t'} \mid  do(\boldsymbol{M}), x_{< t})\notag \\
\big\| P\big(E_{t'} \mid do(\boldsymbol{M}), e_t, x_{< t}\big) \  \Big)
\end{align}
\end{small}
%where \(do(\boldsymbol{M} \sim \mathcal{U})\) denotes drawing the intermediate events from a proposal \(\mathcal{U}\) independent of \(e_t\).
\end{definition}

\section{Algorithm: Parallel Causal Discovery}\label{sec:sequential_cd}
TRACE uses a series of parallelizable tensor operations on GPUs. Instead of iterating sequentially, the \(I^{\mathcal{L}}_G\) is evaluated for all candidate edges simultaneously. We start from an unknown \(\mathcal{G}_{un}\) representing the single stream and iteratively remove edges based on the CMI estimation \(\hat{I}_N\) to obtain \(\mathcal{G}_{t,s}\). An overview of the process can be found in Fig.~\ref{fig:trace_diagram}. We introduce Theorem~\ref{thm:trace_soundness}, which guarantees the soundness of our algorithm when returning the strong sample time causal graph.
\begin{theorem}[Soundness of TRACE for the Sample Time Causal Graph]
\label{thm:trace_soundness}
Let $\mathcal{G}_{t,s}$ be the Sample Time Causal Graph of a sequence $s$ generated by a stochastic process. 
Assume the underlying distribution $P$ is $\epsilon$-Strongly Faithful to $\mathcal{G}_{t,s}$ (Def.~\ref{def:strong_faithfulness}) and that TRACE uses a consistent CMI estimator $\hat{I}_N$ with threshold $\tau_\epsilon$ (Lemma~\ref{lemma:identifiability}) and the corresponding \(\epsilon\)-oracle Model \(P_\theta\). Then,
 under Causal Sufficiency (A\ref{assumption:causal_sufficiency}) and Temporal Precedence (A\ref{assumption:temporal_precedence}), it recovers the correct Sample Time Causal Graph $\mathcal{G}_{t,s}$ asymptotically as $N \to \infty$.
\end{theorem}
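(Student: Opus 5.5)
The argument runs in the style of a PC-algorithm soundness proof. We show that, in the limit $N\to\infty$, the edge-pruning decision TRACE makes for every candidate pair $(t,t')$ with $t<t'$ coincides almost surely with the presence or absence of $(t\to t')$ in $\mathcal{G}_{t,s}$.

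\textbf{Step 1: restrict to forward edges.} Start from the complete forward graph $\mathcal{G}_{un}$. By Temporal Precedence (A\ref{assumption:temporal_precedence}), every edge of $\mathcal{G}_{t,s}$ points forward in time, and so does every candidate edge. Orientation is therefore never at issue, and only adjacency has to be decided. There are finitely many pairs, at most $L(L+1)/2$. Any almost-sure statement that holds pairwise thus holds simultaneously for all pairs, since a finite intersection of probability-one events has probability one.

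\textbf{Step 2: reduce each pair to a CI test.} Fix a pair $(t,t')$. We argue that the lagged statistic $I^{\mathcal{L}}_G$, averaged over $e_t$ and the history $x_{<t}$, is the CMI between $E_t$ and $E_{t'}$ given $X_{<t}$ together with the intervened mediators $do(\boldsymbol{M})$. Under Causal Sufficiency (A\ref{assumption:causal_sufficiency}), all back-door paths between $E_t$ and $E_{t'}$ pass through observed events in $X_{<t}$, and conditioning on $X_{<t}$ blocks them. Randomizing $\boldsymbol{M}$ by the uniform intervention (Def.~\ref{def:do_in_sequences_expected}) severs the indirect paths $E_t\to \boldsymbol{M}\to E_{t'}$, so only the direct edge can transmit dependence.

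\textbf{Step 3: invoke the Markov property.} By the Markov assumption, $t\to t'\notin\mathcal{E}_t$ implies that the true statistic is exactly $0$. Conversely, $\epsilon$-Strong Faithfulness (Def.~\ref{def:strong_faithfulness}) gives true value $>2\tau_\epsilon$ whenever the edge is present.

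\textbf{Step 4: translate to the estimator.} Combine Proposition~\ref{prop:consistency_cmi} with Theorem~\ref{thm:total_error_bound}: almost surely, $\limsup_N|I-\hat I_N|\le\tau_\epsilon$. Hence, eventually in $N$, absent edges give $\hat I_N$ at most slightly above $\tau_\epsilon$, and present edges give $\hat I_N>2\tau_\epsilon-\tau_\epsilon=\tau_\epsilon$. Thresholding at $\tau_\epsilon$ therefore separates the two cases.

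The main obstacle is the non-strict inequality at the boundary for absent edges. With $I=0$, the bound only guarantees $\limsup \hat I_N\le\tau_\epsilon$, so an absent edge could converge exactly onto the threshold. I would handle this in one of two ways. The first is to read the pruning rule as "keep iff $\hat I_N>\tau_\epsilon+\eta$" for an arbitrarily small slack $\eta>0$ below the faithfulness margin, so that present edges exceed $\tau_\epsilon+\eta$ eventually and absent edges fall below it. The second is to note that Theorem~\ref{thm:total_error_bound} is a worst-case bound that is not attained, which makes the inequality strict. This is exactly the content packaged in Lemma~\ref{lemma:identifiability}, which I would cite directly for this step.

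A second delicate point is that Theorem~\ref{thm:total_error_bound} is stated for adjacent pairs $(E_t,E_{t+1})$. I would argue that the same Alicki--Fannes--Winter argument applies verbatim to the lagged conditional distributions. Those distributions are the $P_\theta$-marginals after averaging over the intervened $\boldsymbol{M}$, and averaging over $\boldsymbol{M}$ does not increase total variation distance by convexity. The same $\tau_\epsilon$ therefore controls the lagged estimates.

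Finally, the union over the finitely many pairs gives $\hat{\mathcal{G}}=\mathcal{G}_{t,s}$ with probability $1$ as $N\to\infty$.
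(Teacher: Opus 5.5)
Your proposal is essentially the paper's proof. The paper's ``induction on $t$'' never uses its inductive hypothesis; it decides each candidate $E_{t-k}$ separately via Lemma~\ref{lemma:identifiability}, accepting true parents because $\epsilon$-Strong Faithfulness gives $I>2\tau_\epsilon$ and rejecting non-parents because the Markov condition gives $I=0$, which is exactly your pairwise test plus a finite union over candidates. Your extra care with mediators, the boundary case $\limsup_N \hat I_N=\tau_\epsilon$ (the slack-$\eta$ fix works; the claim that the bound is not attained is unjustified), and lagged pairs goes beyond the paper, but Step~3 then needs $\epsilon$-Strong Faithfulness for the mediator-randomized statistic rather than the observational $I(E_t;E_{t'}\mid X_{<t})$ of Def.~\ref{def:strong_faithfulness}, and your convexity argument only controls the uniform average over $\boldsymbol{M}$, not the differing $P$- and $P_\theta$-weights on $X_t$ that enter when you condition on the binary $E_t$ or marginalize it, so ``the same $\tau_\epsilon$'' does not follow as stated.
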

\textit{(Proof sketch)}
By induction, we show that for each sequential step \(t\), we can recover the potential causes \(E_{<t}\) of the effect event \(E_t\) using the consistent CMI estimator (Prop.~\ref{prop:consistency_cmi}) which generates a \(\epsilon\)-Strong Faithful CI-test (Lemma~\ref{lemma:identifiability}) and control for lagged effects using (Def.~\ref{def:lagged_ig}). By temporal precedence (A\ref{assumption:temporal_precedence}) and causal sufficiency (A\ref{assumption:causal_sufficiency}), we can conclude that no other events will affect the effect event \(E_t\) and thus verify the heredity.

\subsection{Scalability of TRACE}
%\paragraph{Context Truncation}
To parallelize the Monte-Carlo estimation (Eq.~\eqref{eq:cmi_theorique}) and counterfactual sampling (Eq.~\ref{eq:lagged_info_gain}) we avoid computing the CMI for the full trajectory \(x_{0:t-1}\) but a truncated version, called \emph{context} as 
\[x_{<t} \approx  x_{0:c} \; \text{for} \; c < t \; \text{and} \; 0 < c \ll L\]
We argue that it makes it possible to parallelize the CI-tests by having one common dimension for the sampling and renders the problem feasible on GPUs. We usually take \(c = \text{max}(0.1L, 20)\) in our experiments. Although it might break Markovianity for long sequences, empirical results show robustness to this truncation. We provide ablation to unseen sequence lengths during training and show our method to be robust to high delayed effects in Fig~\ref{fig:main_results}. 
\paragraph{Sparse Approximation}
For each time step \(t\), we must perform \(t\) CI-tests (one for every potential lag per step). Thus, for a sequence of length \(L\), the total number of CI-tests is given by \(\sum^L_{t=1} t = \frac{L(L+1)}{2}\) which grows quadratically with the sequence length. As a result, even on multiple GPUs, it becomes tricky to infer for long sequences \(L > 100\). To solve this, we propose a \emph{sparse variant} for which we bound the lagged effects of previous events on future events up to a memory \(m\). With \(m \ll L\), TRACE scales linearly with the sequence length \(L\):
\begin{equation*}\label{eq:linearization_complexity}
    \mathcal{O}( N \cdot (L-c) \cdot L \cdot |\mathcal{X}|) \xrightarrow{\text{Bounded Memory}} \mathcal{O}(N \cdot m \cdot L \cdot |\mathcal{X}|)
\end{equation*}

\section{Experiments}
We evaluate TRACE on synthetic nonlinear Structural Causal Models (SCMs) and real-world vehicle logs. All baselines utilize the same frozen backbone to isolate the efficacy of the inference mechanism.
%TRACE is implemented in Python. We provide the source and evaluation code anonymously for reproducibility. 
A more complete protocol description can be found in Appendix~\ref{appendix:evaluation} as well as additional ablations. We also provide a full discussion on the \textbf{limitations of our assumptions} and conduct additional experiments in Appendix~\ref{sec:limitation}. We recall that methods like PCMCI~\cite{pcmci}, DYNOTEARS~\cite{dynotears}, VARLiNGAM~\cite{varlingam},  TCDF~\cite{tcdf}, NOTEARS~\cite{dag_no_tears}, DAG-GNN~\cite{neural_dag}) operate on 
\emph{multivariate time series or tabular data not on discrete sequences of events}.

\subsection{Experimental Setup}
\paragraph{Synthetic nonlinear-SCM}
We validate TRACE on sequences generated by SCMs with controllable memory $m$, sequence length \(L\), and vocabulary size $|\mathcal{X}|$. Our evaluation proceeds in two phases: (1) We train a standard AR LM (LLaMa architecture \cite{touvron2023llamaopenefficientfoundation}) on the SCM and validate the training by monitoring the oracle scores \(\hat{\epsilon}\) (Eq.~\ref{eq:oracle_score}, normalized \(\epsilon\)) (2) We then apply TRACE to recover the summary causal graph of each single observation. To evaluate performance, we perform atomic interventions by uniformly randomizing 
\(E_{t}\) and measuring the average KL divergence over 10 counterfactual between post-intervention and observational distributions of \(E_{t'}\). If the divergence is above \(\tau> 0.05\), an edge \(E_t \rightarrow E_{t'}\) exists in \(\mathcal{G}_{t,s}\).
We then report the Precision, Recall, and Structural Hamming Distance (SHD) against this ground truth.

We benchmark TRACE against four distinct baseline types: 
\begin{itemize}[leftmargin=*, noitemsep, topsep=1pt]      
    \item \textbf{Neural Granger}: A Granger-causal discovery method that uses the same AR Model as TRACE but computes the difference in probability rather than the CMI to detect causality.
    \item \textbf{Attention}: We train a BERT~\cite{devlin-etal-2019-bert} model on the same SCM with the same model capacity and training steps as \(\text{Tf}_\theta\) and extract the attention scores at the deepest layer. A threshold \(\tau = 0.02\) is applied to get the adjacency matrix. 
   \item \textbf{Saliency} (Input $\times$ Gradient): A local sensitivity baseline that estimates feature importance by computing the gradient of the target token's log-probability with respect to the input embeddings~\cite{inputxgradient}.
    \item \textbf{Shapley Value Sampling}: An axiomatic attribution method rooted in cooperative game theory~\cite{shap}. Shapley values estimate the \textit{average marginal contribution} of a token $x_{t}$ to the prediction of $x_{t'}$ by sampling permutations of the input history.
    \item \textbf{Naive baselines}: A random guesser that predicts edges $(X_j \to X_i)$ with a fixed probability $\rho=0.01$ and a frequency baseline that assumes the top-$k$ most frequent event types are universal causes. These tests reveal whether the task is non-trivial.
\end{itemize}
\subsection{Comparative Analysis}
As detailed in Table \ref{tab:baselines}, TRACE establishes state-of-the-art performance for causal discovery on discrete sequences generated from single streams and outperforms the strongest baseline by over 20 F1 points. While attention scores alone fail to distinguish correlation from causation (F1 \(0.50\)), the Neural Granger baseline achieves a respectable F1 of \(0.69\) but suffers from high variance (SHD \(100.2 \pm 14.6\)). Saliency methods, traditional in NLP, also fail to distinguish causal links between events, especially with poor precision \(0.51\). This disparity highlights a critical insight:
\textbf{measuring the CMI is a far more robust signal of causality} than monitoring the probability fluctuation of a single target token (Granger) or relying on metrics that are not anchored in causal discovery (Saliency, attention scores).

\begin{table}[!h]
  \caption{\textbf{Identifiability Comparison.} Comparison of causal discovery performance on synthetic SCMs with \(|\mathcal{X}|=1000, L=64, \epsilon=0.05, \tau=3.10^{-5}, N=128\). \textbf{TRACE} significantly outperforms local (Saliency) and global (Granger/Shapley) baselines, achieving over 20 points higher F1 while maintaining high precision. Results across 10 runs are reported.}% \textbf{TRACE} significantly outperforms attention-based, Granger and attribution methods.}
  \label{tab:baselines}
  \begin{center}
    \begin{small}
      \begin{sc}
        \resizebox{\columnwidth}{!}{
        \begin{tabular}{lccc}
          \toprule
          Method & SHD ($\downarrow$) & F1 ($\uparrow$) & Precision ($\uparrow$) \\
          \midrule
          Random                       & 218.8{\scriptsize $\pm$3.2} & 0.01{\scriptsize $\pm$0.00} & 0.04{\scriptsize $\pm$0.01} \\
          Frequency                    & 723.0{\scriptsize $\pm$6.7} & 0.09{\scriptsize $\pm$0.00} & 0.06{\scriptsize $\pm$0.00} \\
          Attention (BERT)             & 321.0{\scriptsize $\pm$15}  & 0.50{\scriptsize $\pm$0.01} & 0.35{\scriptsize $\pm$0.01} \\
          Saliency (Input x Gradient LLaMA)       & 160.2{\scriptsize $\pm$6.55}  & 0.67{\scriptsize $\pm$0.01} & 0.51{\scriptsize $\pm$0.01} \\
          Shapley Value Sampling (LLaMA)       & 148.0{\scriptsize $\pm$5.09}  & 0.60{\scriptsize $\pm$0.01} & 0.55{\scriptsize $\pm$0.01} \\
          Neural Granger (LLaMA)       & 100.2{\scriptsize $\pm$14.6}  & 0.69{\scriptsize $\pm$0.04} & 0.71{\scriptsize $\pm$0.04} \\
          \midrule
          \textbf{TRACE} (LLaMA)       & \textbf{28.6}{\scriptsize $\pm$2.8} & \textbf{0.91}{\scriptsize $\pm$0.01} & \textbf{0.89}{\scriptsize $\pm$0.01} \\
          \bottomrule
        \end{tabular}
        }
      \end{sc}
    \end{small}
  \end{center}
  \vskip -0.1in
\end{table}

\begin{figure}[!h]
    \centering
   \includegraphics[width=1\linewidth]{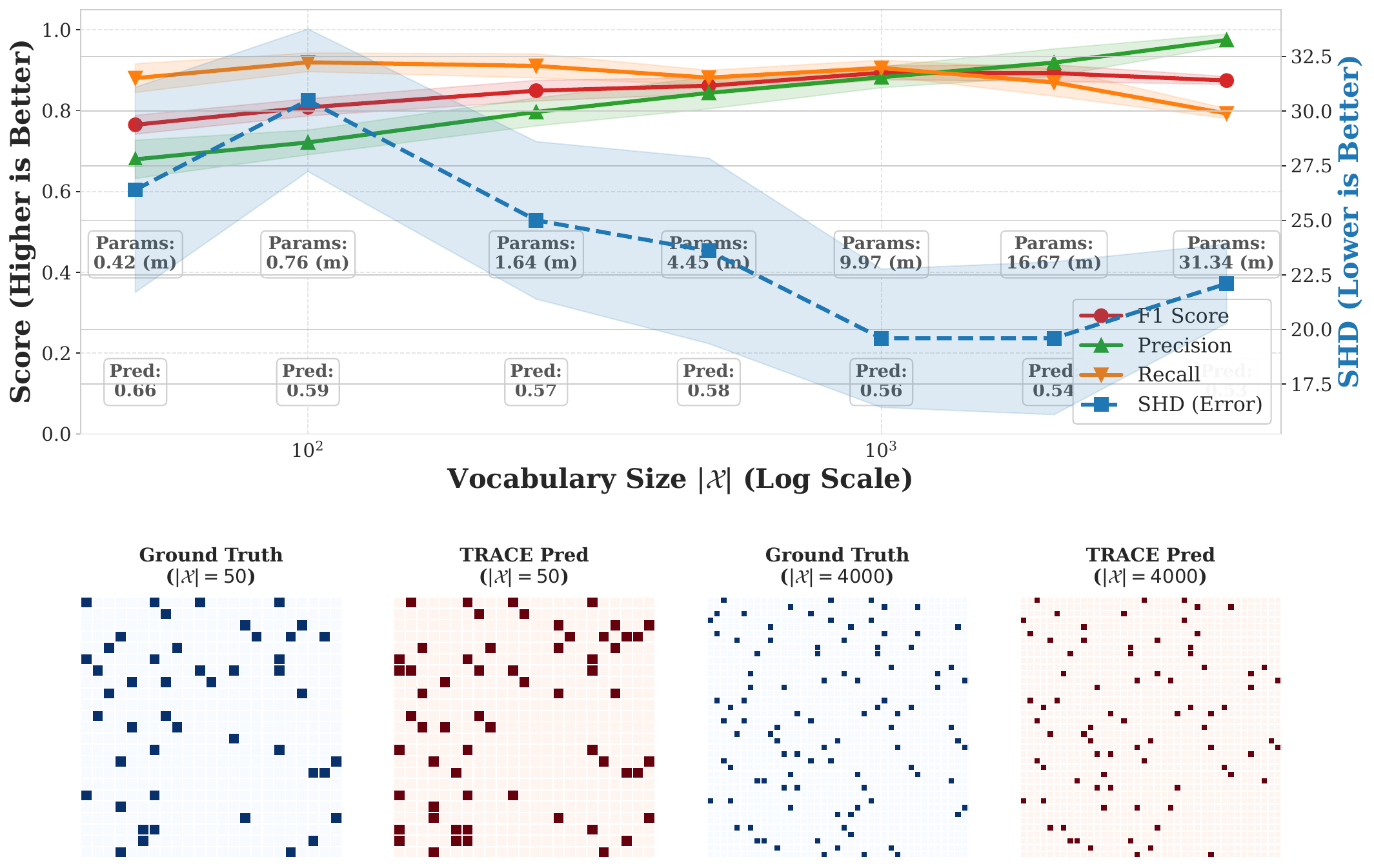}
       \caption{\textbf{Scalability to High-Dimensional Event Spaces.} Evaluation of structural identifiability across exponentially growing vocabulary sizes. \textbf{Top:} Evolution of discovery metrics. TRACE exhibits \textbf{performance invariance}, maintaining F1 $\approx 0.81$ even as the combinatorial search space explodes. \textbf{Bottom:} Visual examples of recovered summary graphs $\mathcal{G}_s$ at scale. Predictability scores (\textit{Pred} = $H(P)/H_{max}$) confirm that TRACE succeeds even in high-entropy regimes. ($\hat{\epsilon}=0.01, L=64, N=64, \tau=10^{-4}$).}
    \label{fig:Comparison_identifiability_n_scalability}
\end{figure}

\subsection{Scalability and Robustness Analysis}

\paragraph{Breaking the Curse of Dimensionality}
Standard causal discovery algorithms suffer from combinatorial explosions when the variable count or state space increases. In Fig.~\ref{fig:Comparison_identifiability_n_scalability}, we challenge TRACE with massive vocabularies to validate its scalability to high-dimensional event types. Remarkably, we observe that \textbf{discovery performance is stable across vocabulary size} $|\mathcal{X}|$. The F1 score remains stable at $\approx 0.81$ even as the search space grows exponentially and the underlying SCM entropy increases. This confirms a key advantage of our approach: by leveraging a pre-trained AR model, TRACE enables causal discovery over massive event vocabularies at scales that were infeasible before.

\begin{figure*}[!h]
    \centering
    \includegraphics[width=1\linewidth]{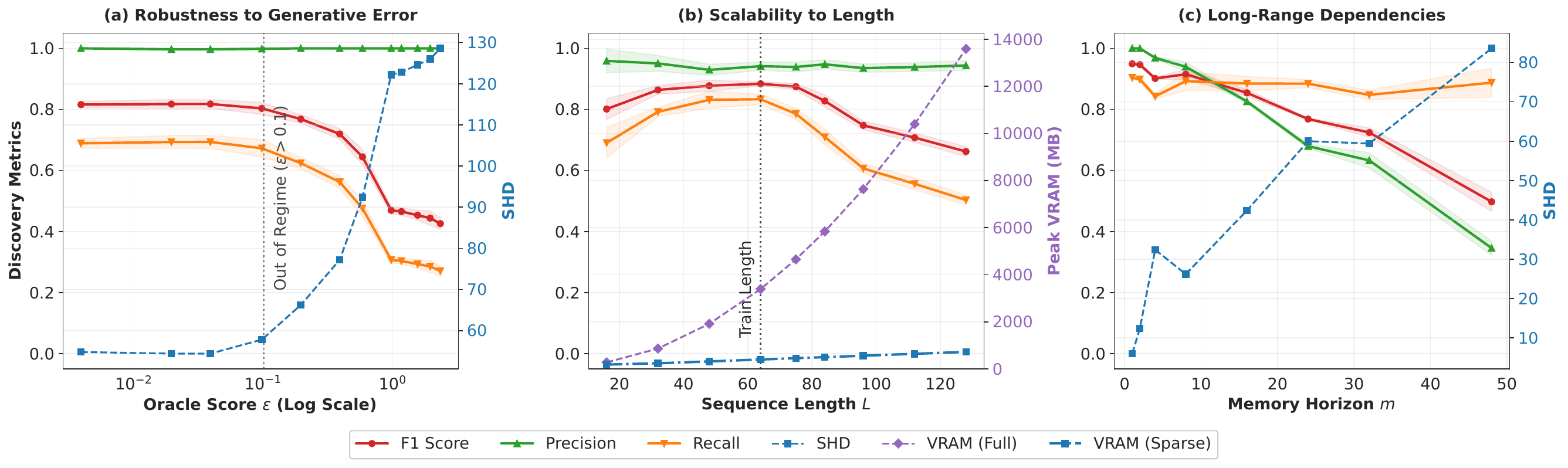}
    \caption{\textbf{Robustness and Scalability Analysis} ($|\mathcal{X}|=1000, N=128, \tau=10^{-4}, L=64$). Evolution of causal discovery performance (F1, Precision, Recall, SHD). \textbf{(a) Robustness to Generative Error:} Performance as a function of the model's oracle score $\epsilon$. TRACE exhibits a phase transition, recovering structure even for imperfect models ($\epsilon < 0.1$) and maintaining high Precision even as fidelity degrades. \textbf{(b) Scalability to Length:} Performance and GPU memory usage vs. sequence length $L$. The \textbf{Sparse} variant demonstrates linear memory scaling ($O(mL)$), enabling inference on sequences far exceeding the training length ($L=64$), whereas the Full variant scales quadratically. \textbf{(c) Long-Range Dependencies:} Robustness to increasing delayed-effects $m$. TRACE maintains F1 $>0.8$ even as dependencies span one third of the sequence ($m=20$), confirming the method's ability to capture distant causal mechanisms.}
    \label{fig:main_results}
\end{figure*}

\paragraph{Unseen Sequence Lengths}
We evaluate TRACE's ability to scale to unseen sequence length during training of the AR Model, as well as the maximum GPU memory consumption for the \textit{Full} and the \textit{Sparse} variant, which computes up to \(m\) lagged effects per event. As shown in Fig.~\ref{fig:main_results} (b), the sparse variant of TRACE achieves linear memory scaling, whereas the full variant exhibits quadratic memory growth. For the classification metrics, we observe a quick degradation in Recall as the sequence length exceeds the training window. Hence, the model's ability to identify all causal links diminishes as the temporal context becomes increasingly out-of-distribution. Crucially, however, Precision remains remarkably high and stable \((\approx 0.95\)) across all tested lengths. This suggests that the model becomes more conservative with longer sequences. This \textbf{conservative failure mode} is highly desirable and further supported by Fig.~\ref{fig:main_results} (a).

%\paragraph{Scalability and High-Dimensionality}
%A key contribution of our work is scalability to massive event vocabularies, sequence length and memory \(m\) which was, to the best of our knowledge, not carried out.  
%\paragraph{Event Vocabulary Size}
%Fig.~\ref{fig:Comparison_identifiability_n_scalability} (Right) demonstrates the robustness of TRACE as \(|\mathcal{X}|\) increases from \(10\) to \(10^4\) and the entropy \(H(P)\) of the SCM. We also plot on the same graph the mean runtime in seconds to show linear scalability with \(|\mathcal{X}|\).
%\subsection{Robustness Analysis}
\paragraph{Identifiability Precedes Convergence.}
A central question is whether the AR model must be perfect ($\epsilon \to 0$) to recover causal structure. Fig.~\ref{fig:main_results} (a) reveals a critical finding: \textbf{exact convergence is not a necessary condition for identifiability}. We observe a distinct phase transition where the coarse-grained causal graph is recovered early in the training dynamics ($\epsilon \approx 0.1$), before the model masters fine-grained transition probabilities. Crucially, TRACE exhibits a \textbf{conservative failure mode}. As approximation error $\epsilon$ increases, the model defaults to "blindness" (lower Recall) rather than "hallucination" (lower Precision), maintaining near-perfect precision ($>0.95$) even in high-entropy regimes. This suggests that model uncertainty manifests as a failure to detect weak signals rather than the generation of false positives—a desirable property for safety-critical applications.
\paragraph{Deep Temporal Dependencies.}
In Fig.~\ref{fig:main_results} (c), we stress-test the method by extending the memory horizon of the underlying SCM up to $m=48$. TRACE maintains high stability (F1 $> 0.80$) up to \(m=20\), confirming that our parallelized intervention mechanism effectively \textbf{captures long-range dependencies}. While precision naturally softens as the number of events behind detected increases, hence necessitating a bigger threshold \(\tau\).
\begin{figure}[!h]
    \centering
    \includegraphics[width=0.9\linewidth]{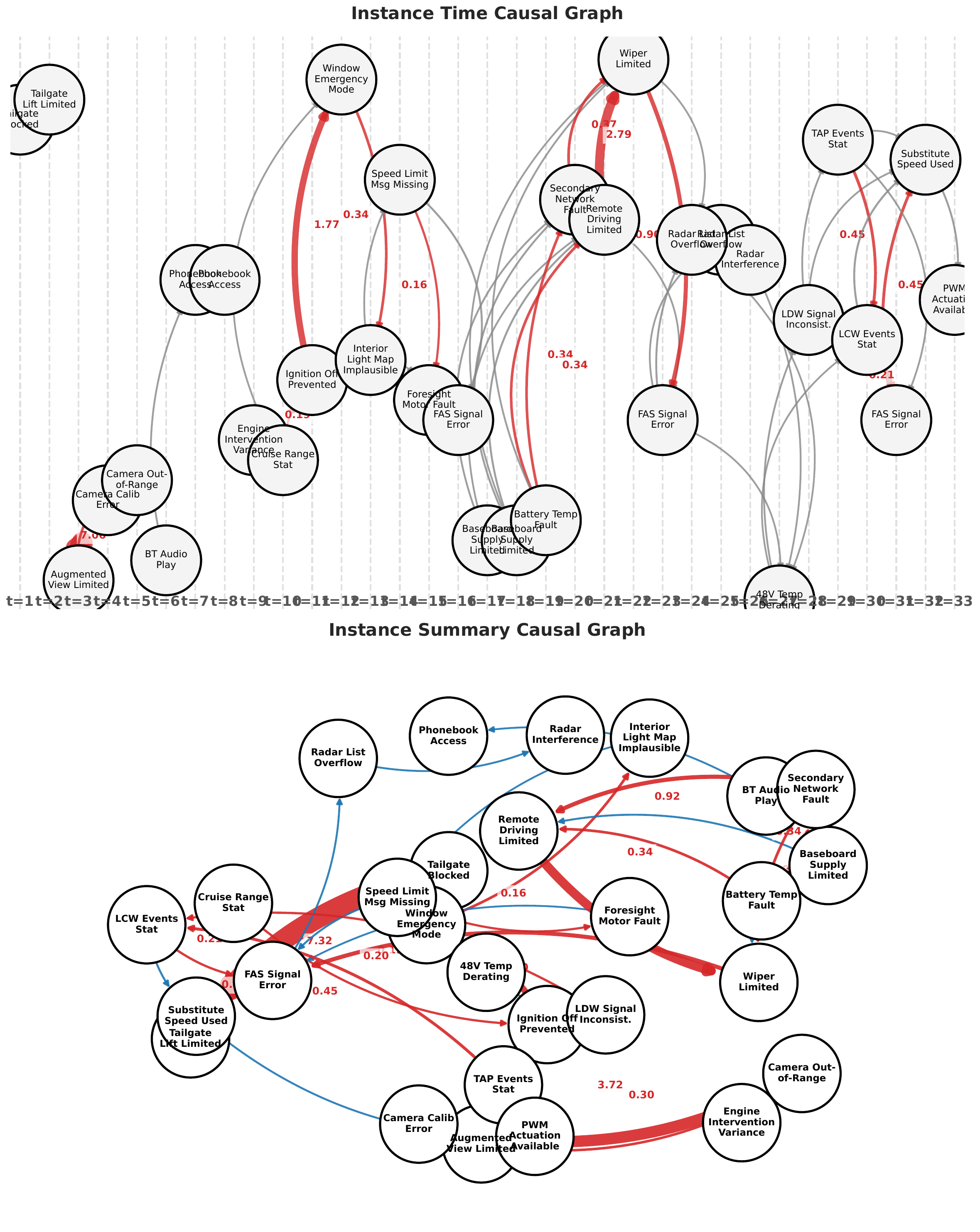}
    \caption{\textbf{Sample Time Causal Graph.} Temporal evolution of a diagnostic defect cascade in a vehicle ($|\mathcal{X}| \approx 29,100$). TRACE effectively captures causal relationships, revealing distinct \textbf{error clusters} at different time steps (e.g., initial sensor failures at $t=3$ triggering mechanical faults at $t=12$, battery at issue \(t=17\)). This enables actionable root-cause analysis by isolating the specific onset of a failure mechanism and their strength using The CMI \(\hat{I}_N\)}% is reported as causal strength between nodes.}
    \label{fig:time_instance_graph_dtc}
\end{figure}
\subsection{Application to Vehicles Diagnostics}\label{sec:application}
To evaluate TRACE in a safety-critical setting, we adopt the real-world vehicle diagnostics environment introduced by \citet{math2024harnessingeventsensorydata}. This dataset consists of high-dimensional sequences of diagnostic trouble codes (DTCs). We utilize a 120M-parameter Transformer backbone as our density estimator $\text{Tf}_\theta$, operating over a massive vocabulary of $|\mathcal{X}| \approx 29,000$ event types with a training loss of $\mathcal{L}_{AR} = 1.91$. We apply TRACE to analyze complex electrical cascades, specifically focusing on battery and sensor degradation scenarios as shown in Fig.~\ref{fig:time_instance_graph_dtc} with \(\mathcal{G}_{t,s}\) and in Fig.~\ref{fig:carformer_summary_appendix} with \(\mathcal{G}_s\). In standard approaches, these events are often collapsed into a static correlation graph, obscuring the order of operations and their causal relationship. %We also output the summary graph in Fig.~\ref{fig:carformer_summary_appendix}.
%To evaluate TRACE in a safety-critical setting, we adopt the real-world vehicle diagnostics environment introduced by \citet{math2024harnessingeventsensorydata}. This dataset consists of high-dimensional sequences of diagnostic trouble codes (DTCs) preceding component failures. We utilize a 120M-parameter Transformer backbone (referred to as CarFormer) as our density estimator $\text{Tf}_\theta$, operating over a vocabulary of $|\mathcal{X}| \approx 29,000$ event types and achieving a training loss of \(\mathcal{L}_{AR} = 1.91\).
\section{Conclusion}
We presented TRACE, a framework for sample-level causal discovery that repurposes pretrained autoregressive sequence models as density estimators. By amortizing the learning of process dynamics during pretraining, TRACE enables accurate causal structure recovery from a single observed sequence at inference time — bypassing the combinatorial burden of classical constraint-based approaches and scaling linearly with the event vocabulary, without task-specific retraining. Our theoretical and experimental analysis confirms that structural identifiability is recoverable even under imperfect model approximations, provided the stated assumptions hold. %TRACE represents a principled step toward scalable causal reasoning over raw, high-dimensional event sequences in industrial and clinical settings.%Overall, TRACE represents a principled step toward causal world models operating directly on raw event sequences.

%We presented TRACE, a framework that performs single-stream causal discovery by repurposing autoregressive sequence models as density estimators. By bypassing the combinatorial intractability of standard constraint-based methods, TRACE achieves linear scalability with vocabulary size, effectively transforming off-the-shelf production forecasters into potent causal discovery engines without task-specific retraining. %While currently predicated on stationarity and causal sufficiency—assumptions we aim to relax in future work—
%Our theoretical and experimental analysis confirms that structural identifiability remains recoverable even under imperfect approximations. 
%Hence, TRACE enables a principled step toward causal world models from raw sequences.

%Ultimately, TRACE validates that generative priors can decode causal structure in high-dimensional discrete domains, laying the groundwork to unify single-stream discovery in continuous time series and integrating outcomes labels.

%via autoregressive flows.
%* Neural Autoregressive Flows ?*

%Current causal discovery methods for time series are not designed in the single stream regime, nor to the high-dimensionality of real-world problems. We showed that by using autoregressive language models, we can perform efficient parallel CI-tests. By combining events and outcomes, further work might extend this framework to discover the full causal structure of a chain of events leading to outcomes, potentially enabling full root-cause analysis, as well as graph aggregations to get the global causal graph across all vocabulary.

\clearpage
\section*{Impact Statement}
This paper presents work whose goal is to advance the field of Machine Learning, specifically in the domain of causal discovery for high-dimensional event sequences. Our framework, TRACE, enables scalable causal discovery in settings previously considered intractable, such as industrial diagnostic logs or electronic health records.

While the potential to improve decision-making in healthcare and manufacturing is significant, we highlight two primary ethical considerations. First, causal discovery from observational data relies on theoretical assumptions (e.g., causal sufficiency ) that may be violated in complex real-world environments. Consequently, in safety-critical applications—such as determining patient treatment plans or diagnosing vehicle failures—causal graphs inferred by TRACE should be treated as hypothesis generation tools requiring expert validation rather than autonomous directives for intervention. Second, as our method leverages autoregressive density estimators trained on potentially sensitive sequential data, practitioners must adhere to strict data privacy protocols to prevent the inadvertent memorization or leakage of sensitive user information.
% In the unusual situation where you want a paper to appear in the
% references without citing it in the main text, use \nocite
\nocite{langley00}
\bibliography{example_paper}
\bibliographystyle{icml2026}

%%%%%%%%%%%%%%%%%%%%%%%%%%%%%%%%%%%%%%%%%%%%%%%%%%%%%%%%%%%%%%%%%%%%%%%%%%%%%%%
%%%%%%%%%%%%%%%%%%%%%%%%%%%%%%%%%%%%%%%%%%%%%%%%%%%%%%%%%%%%%%%%%%%%%%%%%%%%%%%
% APPENDIX
%%%%%%%%%%%%%%%%%%%%%%%%%%%%%%%%%%%%%%%%%%%%%%%%%%%%%%%%%%%%%%%%%%%%%%%%%%%%%%%
%%%%%%%%%%%%%%%%%%%%%%%%%%%%%%%%%%%%%%%%%%%%%%%%%%%%%%%%%%%%%%%%%%%%%%%%%%%%%%%
\newpage
\appendix
\onecolumn

\section{Terminology, Notations and Definitions}\label{appendix:notations}
We use capital letters (e.g., \(X\)) to denote random variables, \(P(X)\) the probability distribution of \(X\), \(P(X = x) = p(x)\) the probability of the realisation \(x\) for the random variable \(X\) and bold capital letters (e.g., \(\boldsymbol{X}\)) for sets of random variables. We only work with discrete distributions. Let \(\mathcal{X}\) denote the alphabet of all random variables and \(D_{KL}\) the \textit{Kullback-Leibler divergence} \cite{cover1999elements}. 

\subsection{Definitions}

\begin{definition}[Faithfulness]\label{def:bn_faithfulness}\cite{Spirtes2001CausationPA}. Given a BN \(<\mathcal{X}, \mathcal{G}, P>, \mathcal{G}\) is faithful to \(P\) if and only if every conditional independence present in \(P\) is entailed by \(\mathcal{G}\) and the Markov condition holds. \(P\) is faithful if and only if there exist a DAG \(\mathcal{G}\) such that \(\mathcal{G}\) is faithful to \(P\).
\end{definition}

%(\textbf{The Markov Blanket and Markov Boundary}) In a faithful BN $<\mathcal{X}, \mathcal{G}, P>$, The Markov blanket \textbf{Mb} of a label variable $Y$ namely $ \text{Mb}(Y)$ is the minimal set for which $I = \boldsymbol{X, Y|}\text{MB}(\boldsymbol{X})$, for all $\boldsymbol{X} \in V - \{Y\} - \text{MB}(Y)$ \citet{mb_ori} which consists of its parents, children, and spouses. 
%Theorem 1 \citet{pearl_1998_bn}. Given the Mb(T), $X \bot T|\textbf{MB}(T) \forall X\in \mathcal{X} - \textbf{MB}(T) - \{T\}$
%The Markov boundary \textbf{MB} of $Y$ is the minimum Markov blanket of Y satisfying: $\forall \boldsymbol{Z} \subset$ \textbf{MB}, 
\begin{definition}[Conditional Independence]\label{def:conditional_independence} Variables \(X\) and \(Y\) are said to be conditionally independent given a variable set \(\boldsymbol{Z}\), if \(P(X, Y|\boldsymbol{Z}) = P(X|\boldsymbol{Z})P(Y|\boldsymbol{Z})\), denoted as \(X \bot \space \space Y| \boldsymbol{Z}\). Inversely, \(X \not\perp \space \space Y| \boldsymbol{Z}\) denotes the conditional dependence.
Using the conditional mutual information (CMI) \cite{cover1999elements} to measure the independence relationship, this implies that \(\text{I}(X,Y|\boldsymbol{Z}) = 0 \Leftrightarrow X \perp Y | \boldsymbol{Z}\).
\end{definition}

%\begin{definition}[Full Time Causal Graph]\label{def:ftcg}\cite{assaad_survey_ijcai_cd_time_series}
%Let \(\{X_t, t \in \mathbb{N}\}\) be a discrete-time stochastic process and $\mathcal{G}_{(t)} = (\mathcal{X}, {E})$ the associated finite full time causal graph. The set of vertices in that graph consists of the set of components \((X_0, X_1, \cdots, X_L)\) for each time \(t\). The edges \(E\) of the graph are defined as follows: variable \(X_{t-i}\) and \(X_t\) are connected by a lag-specific directed link \(X_{t-i} \rightarrow X_t\) in \(\mathcal{G}_{(t)}\) pointing forward in time if and only if \(X_t\) causes \(X\)

%is a DAG where the nodes 2$\mathcal{T} = \{0, \dots, L\}$ correspond to the time steps of the sequence.3 A directed edge $(t-k) \to t$ exists in $\mathcal{E}_{full}$ if and only if the realization of the event at $t-k$ is a structural cause of the event at $t$ in the underlying DGP:
%\begin{equation}
%(t-k) \to t \in \mathcal{E}{full} \iff P(E_t \mid do(E{t-k}=1)) \neq P(E_t)
%\end{equation}
%\end{definition}

%\subsection{Detailed Assumptions}
%\begin{assumption}[Stationary Markovian Dynamic System]\cite{koller2009probabilistic}
%\label{assumption:stationarity} 
%We assume that the probability distribution \(P(X^{(t_{i+1})}_{i+1}|X^{(t)}_i)\) is the same for all \(t\). In this case we represent our transition probability \(P(X_{i+1}|X_i)\), so that for any, \(t \geq 0, \) as:
%\begin{equation}
%    P(X^{(t_{i+1})}_{i+1}|X^{(t_i)}_i) = P(X_{i+1}|X_i)
%\end{equation}
%\end{assumption}
\section{Related Work}\label{app:related_work}
\subsection{On the Limitation of Multivariate Representation for Event Sequences}
We detail the limitation of representing discrete event sequences as multivariate time series in the high-dimensional regime. 

\subsubsection{Multivariate Hawkes Process Log-likelihood Derivation}\label{derivation_ll_hawke_mutual}

\begin{definition}[Mutually-Exciting Hawkes processes]
    Consider \(\mathbf{N}(t) = (N^1(t), \cdots, N^{|\mathcal{X}|}(t))\) as a collection of \(|\mathcal{X}|\) counting processes with \(N^k(t)\)'s occurrence times denoted as \(t^k_1, t^k_2\) etc. They are mutually-exciting Hawkes processes if \(N^k(t)\)'s conditional intensity follows:
    \[\lambda^*_k(t) = \lambda + \sum^{|\mathcal{X}|}_{j=1}\sum_{t^j_i < t} \mu_{j,k}(t-t^j_i) \; \text{for }\; k=1, \cdots, |\mathcal{X}|\]
    where \(\mu_{j,k}(s) \geq 0\)
\end{definition}

The likelihood for any point process parametrized by \(\theta\) with observations \(\{t_1, \cdots, t_L\}\) within a time horizon \(0 \leq t \leq T\) can be computed as the sum of the log-likelihood for each process~\cite{hawke_process_and_app_2025}: 

\begin{equation*}
    \ln \mathcal{L} (\theta|t_1, \cdots, t_L, T) = \sum^{|\mathcal{X}|}_{k=1} \ln \mathcal{L}_k (\theta|t_1, \cdots t_L)
\end{equation*}
where each term is defined by:
\begin{align}
    \ln{\mathcal{L}_k}(t_1, \cdots, t_L) &= \ln \left(\left[\prod^L_{i=1} \lambda^{*}_k(t_i)\right]\exp{(-\int^T_0 \lambda^{*}_k(t) dt)}\right) \notag\\
\end{align}
This expression reduces to:
\begin{align}
    &= \sum^L_{i=1}\ln{\lambda^{*}_k(t_i)} + \ln{\left(\exp{(-\int^T_0 \lambda^{*}_k(t) dt)}\right)} \notag\\
    &= \sum^L_{i=1}\ln{\lambda^{*}_k(t_i)} -\int^T_0 \lambda^{*}_k(t) dt\label{eq:mll_mutual_hawkes_per_event}
\end{align}
Finally, we have for the total log-likelihood:
\begin{equation}\label{eq:mll_mutual_hawkes_appendix}
     \ln \mathcal{L} (\theta|t_1, \cdots, t_L, T) = \sum^{|\mathcal{X} |}_{k=1}\sum^L_{i=1}\ln{\lambda^{*}_k(t_i)} - \sum^{|\mathcal{X}|}_{k=1}\int^T_0 \lambda^{*}_k(t)dt
\end{equation}

\subsection{Limitations}

\paragraph{Number of parameters.}
The \(N^j_t\) process has either an excitatory effect on \(N^k_t(\phi_{j,k}>0)\) or no effect on \(N^k_t(\phi_{j,k} = 0)\) for \(j = k \) or \(j \not= k\). Importantly, these processes have \(\mathcal{O}(|\mathcal{X}|^2)\) parameters to fit, which is often intractable in practice for many applications~\cite{hawke_process_and_app_2025}. For \(|\mathcal{X}|= 10^3\) the number of parameters is already \(1\) million. Moreover, the actual number of scalars being optimized in practice is more than \(|\mathcal{X}|^2\), since there is the base intensity \(\lambda\), the parameters of the excitation functions \(\mu\) (decay rates) and interaction magnitude. The literature often assumes \(|\mathcal{X}|^2+|\mathcal{X}|\)~\cite{hawke_process_and_app_2025}. 

\paragraph{Optimization.}
To understand why the Hawkes process is insufficient for our setting, we first show that the MLE of its parameters is computationally intractable at scale. We therefore seek the parameters (base rate $\alpha$, decay $\beta$) that make the observed sequence of events most probable. The likelihood for any point process parametrized by \(\theta\) with observations \(\{t_1, \cdots, t_L\}\) within a time horizon \(0 \leq t \leq T\) can be computed as the sum of the log-likelihood for each process: 
\begin{equation}\label{eq:mll_mutual_hawkes}
     \ln \mathcal{L} (\theta|t_1, \cdots, t_L, T) = \sum^{|\mathcal{X} |}_{k=1}\sum^L_{i=1}\ln{\lambda^{*}_k(t_i)} - \sum^{|\mathcal{X}|}_{k=1}\int^T_0 \lambda^{*}_k(t)dt
\end{equation}
Importantly, Eq.~\ref{eq:mll_mutual_hawkes} cannot be computed easily since the sum over all event types \(|\mathcal{X}| \) and the double summation over all pairs of event \(t_i\) and \(t_j\), leading to \(\mathcal{O}(|\mathcal{X}|^2\cdot N(T)^2)\) at worst and \(\mathcal{O}(|\mathcal{X}|^2\cdot N(T))\) if using exponential decay~\cite{hawke_process_and_app_2025}. This is a regime where classical MLE is computationally prohibitive and statistically prone to overfitting~\cite{hawke_sparse_mutual, sparsetemporalattention}.

\begin{table}[h]
    \centering
    \caption{\textbf{Data Paradigm Comparison.} Contrast between traditional event sequence causal discovery (multivariate) and our single sequence setting (Session-based/NLP-like). Standard methods require the structure on the left and scale poorly to the structure on the right.}
    \label{tab:data_comparison}
    \resizebox{\columnwidth}{!}{%
    \begin{tabular}{@{}lll@{}}
        \toprule
        \textbf{Feature} & \textbf{Traditional Approach} & \textbf{Ours (TRACE)} \\
        & (e.g., Hawkes, Granger, PCMCI) & (Autoregressive Model / Single Sequence-based) \\
        \midrule
        \textbf{Input Format} & \textbf{Long / Vertical Stream} & \textbf{Wide / Horizontal Batches} \\
        \textbf{Structure} & 
        \begin{minipage}{0.35\columnwidth}
            \scriptsize
            \texttt{Time \hspace{2pt} Event\_Type} \\
            \texttt{0.0 \hspace{5pt} E\_A} \\
            \texttt{0.4 \hspace{5pt} E\_B} \\
            \texttt{1.2 \hspace{5pt} E\_C} \\
            \texttt{... \hspace{8pt} ...}
        \end{minipage} 
        & 
        \begin{minipage}{0.45\columnwidth}
            \scriptsize
            \texttt{Seq\_ID \hspace{2pt} Sequence (Tokens) \hspace{4pt} Time} \\
            \texttt{0 \hspace{15pt} [E\_A, E\_B, E\_A, ...] [0.0, 1.3, 1.4, ...]} \\
            \texttt{1 \hspace{15pt} [E\_C, E\_D, E\_G, ...] [0.0, 1.2, 5.9, ...]}\\
            \texttt{... \hspace{12pt} ...}
        \end{minipage} \\
        \midrule
        \textbf{Dimensionality} & Low \(|\mathcal{X}| \leq 100\) & Massive \(|\mathcal{X}| > 1,000\) \\
%        \textbf{Temporal Assumption} & Continuous Time (Exact timestamps) & Discrete Time (Steps/Positions) \\
        \textbf{Discovery Scope} & Global (Graph of all processes) & Local / Sample (Summary Graph of specific trace) \\
        \textbf{Vocabulary Complexity} & \text{Often} \(O(|\mathcal{X}|^2)\) or \(O(|\mathcal{X}|^3)\) & \(O(L \cdot |\mathcal{X}|)\) (Inference) \\
        \bottomrule
    \end{tabular}%
    }
\end{table}

\section{Proofs}\label{appendix:proofs}

\subsection{Proof of Proposition~\ref{prop:consistency_cmi}}

%\subsection{Perfect Monte-Carlo Sampling using AR Models}\label{sec:smc_cmi}
%Using \(\text{Tf}_\theta\), we assume that we are able to simulate \(N\) independent and identically distributed (i.i.d.) random samples or particles, \(\{x^{(l)}_{<t}; l = 1, \cdots, N\}\) using the AR factorization  of Eq.~\eqref{eq:facto_z} as \(P(X_{<t})\). An empirical estimate of Eq.~\eqref{eq:cmi_theorique} can be given by:
%\begin{align}\label{eq:cmi_estimation_naive_monte_carlo}
%\hat{I}_N(E_{t+1}, E_t \mid X_{<t}) 
%&= \frac{1}{N} \sum_{l=1}^N \mathbb{E}_{e_t} [I_G(E_{t+1}, e_t \mid X_{<t} = x^{(l)}_{<t})]
%\end{align}
%\todo[inline]{Reshape proposition}
%\begin{proposition}[Consistency of the estimated CMI]\label{eq:}
%\end{proposition}
\begin{proof}\label{proof:consistency_estimator_cmi}
The particles $x^{(l)}_{<t}$ are sampled directly from the model $\text{Tf}_\theta$.
%which approximates the true distribution $P$, linearity of expectation holds: $\mathbb{E}[\hat{I}_N] = \mathbb{E}_{x_{<t}}[f(x_{<t})] = I(E_{t+1}; E_t \mid X_{<t})$.
Let $f_\theta(x^{(l)}_{<t})$ represents the estimation CMI for a fixed history \(x_{<t}\) and \(I_\theta\) the CMI with the approximated distribution \(P_\theta\). Expressing this as a difference of conditional entropies: 
\begin{equation}\label{eq:info_gain_bounded}
\begin{aligned}
0 
&\leq \mathbb{E}_{e_t \sim P_\theta} I_G\!\left(E_{t+1}, e_t| x^{(l)}_{<t}\right) \\
&= H_\theta\!\left(E_{t+1} | x^{(l)}_{<t}\right)
   - H_\theta\!\left(E_{t+1}| E_t, x^{(l)}_{<t}\right) \\
&\leq H_\theta\!\left(E_{t+1}\right)
\leq \log{2}.
\end{aligned}
\end{equation}
Thus the posterior variance of \(f_\theta(x^{(l)}_{<t})\) satisfies \(\sigma^2_{f} \triangleq \mathbb{E}_{x_{<t}}[f_\theta^2(x_{<t})] - I_\theta^2(f_\theta) < +\infty\) \cite{Doucet2001} then the variance of \(\hat{I}_N(f)\) is equal to \(\textit{var}(\hat{I}_N(f)) = \frac{\sigma^2_{f}}{N}\) and from the strong law of large numbers: 
\begin{align}
%\hat{I}_N 
%&= \frac{1}{N} \sum_{l=1}^N I_G(X_{i+1}, X_i \mid \boldsymbol{Z} = x^{(l)}_{<t}) \\[6pt]
\hat{I}_N &\xrightarrow[N \to +\infty]{\text{a.s.}} 
\mathbb{E}_{e_t \sim P_\theta, x_{<t} \sim P_\theta}\!\left[ I_G(E_{t+1}, e_t \mid x_{<t}) \right] 
\triangleq I_\theta(f_\theta)
\end{align}
\end{proof}
\subsection{Proof of Theorem~\ref{thm:total_error_bound}}
\begin{proof}
By definition, the Conditional Mutual Information is the difference of two conditional entropies (Eq. \ref{eq:cmi_theorique}):
    
    $$I(E_t; E_{t'} | X_{<t}) = H(E_t | X_{<t}) - H(E_t | E_{t'}, X_{<t})$$

    Our framework operates in the Teacher Forcing regime. We do not sample the history $X_{<t}$ from the model's joint distribution. Instead, we estimate the CMI conditioned on the observed history $x_{<t}^{(l)}$. Consequently, the relevant error metric is the per-step conditional divergence at time $t$, given the fixed history. %Under Assumption~\ref{assumption:oracle} (the $\epsilon$-Oracle), this is bounded by $\epsilon$ regardless of the time index $t$.

    Let $\Delta = |I - I_\theta|$ be the CMI estimation error. By the triangle inequality:

    $$\Delta \leq \underbrace{|H_P(E_t \mid X_{<t}) - H_\theta(E_t \mid X_{<t})|}_{\text{Term A}} + \underbrace{|H_P(E_t \mid E_{t'}, X_{<t}) - H_\theta(E_t \mid E_{t'}, X_{<t})|}_{\text{Term B}}$$

    We apply the sharp continuity bound for conditional entropy in classical systems (\cite{Winter_2016}, Lemma 2). Let $\rho$ and $\sigma$ be the true and model distributions respectively. Let $\delta = |(P(E_t|\cdot) - P_\theta(E_t|\cdot)|_1$ be the total variation distance. Since the target variable $E_t$ is binary (\(d_A = 2\)), the bound is: $$|H_\rho - H_\sigma| \leq \delta \ln(d_A) + (1+\delta)h_b\left(\frac{\delta}{1+\delta}\right)$$

    From Assumption 3.6, $\delta \leq \frac{1}{2}$. Substituting $d_A=2$ (so $\ln 2 $ in nats since we are using cross entropy loss in PyTorch~\cite{pytorch}):$$|H_P(E_t \mid X_{<t}) - H_\theta(E_t \mid X_{<t})| \leq \delta \ln(2) + (1+\delta)h_b\left(\frac{\delta}{1+\delta}\right)$$

    Term B represents the same entropy difference conditioned on an augmented set $\{E_{t'}, X_{<t}\}$. Since the target dimension $d_A$ remains 2, the same bound applies. Summing the terms:
    $$ |I - I_\theta| \leq 2\delta \ln(2) + 2(1+\delta)h_b\left(\frac{\delta}{1+\delta}\right)$$
    Substituting $\delta = \sqrt{\epsilon/2}$ from Pinsker's inequality, and $h_b(x)$ monotonically increasing for small $x$, we obtain the final bound in terms of the oracle score $\epsilon$ as:
    
\begin{equation}
| I - I_\theta  | \leq 2 \sqrt{\epsilon/2} \ln(2) + 2(1+\sqrt{\epsilon/2} )h_b\left(\frac{\sqrt{\epsilon/2} }{1+\sqrt{\epsilon/2} }\right)
\end{equation}    

Finally, we decompose the total error into estimation variance and approximation bias using the triangle inequality:
\begin{equation}
    | \hat{I}_N - I | = | (\hat{I}_N - I_\theta) + (I_\theta - I) | \leq \underbrace{| \hat{I}_N - I_\theta |}_{\text{Estimation Error}} + \underbrace{| I_\theta - I |}_{\text{Approximation Bias}}
\end{equation}
Given that $\hat{I}_N$ is a consistent estimator of the model's internal CMI, $I_\theta$ (Prop.~\ref{prop:consistency_cmi}). By the Strong Law of Large Numbers, $\hat{I}_N \xrightarrow{a.s.} I_\theta$ as $N \to \infty$ the stochastic estimation error vanishes, leaving only the irreducible approximation bias:
$$ \limsup_{N \to \infty} | \hat{I}_N - I | \leq 0 + |I - I_\theta|$$

We thus obtain the final bound in terms of the oracle score as:

$$ \limsup_{N \to \infty} | \hat{I}_N - I | \leq 2 \sqrt{\epsilon/2} \ln(2) + 2(1+\sqrt{\epsilon/2} )h_b\left(\frac{\sqrt{\epsilon/2} }{1+\sqrt{\epsilon/2} }\right)$$
This confirms that minimizing the cross-entropy loss ($\epsilon$) directly minimizes the upper bound on structural causal error using the CMI as causal strength.
\end{proof}
\subsection{Proof of Lemma~\ref{lemma:identifiability}}

\begin{proof}
Let $\Delta = | \hat{I}_N - I |$ be the total estimation error. From Theorem~\ref{thm:total_error_bound}, we have a finite bound corresponding to a noise floor \(\tau_\epsilon\) such as $\limsup_{N \to \infty} \Delta \leq \tau_\epsilon$. 

We analyze the two cases for binary classification of the edge $E_{ij}$:

\textbf{Case 1: No Edge ($H_0$).}
If the edge is absent, $I = 0$. The estimator is bounded by the noise floor: 
\begin{align}
  &  0 \le |\hat{I}_N -I| \le \tau_\epsilon\notag  \\
  &\iff 0 \leq \hat{I}_N \leq \tau_\epsilon\notag
\end{align}
The CI-test is rejected (Correct Rejection).

\textbf{Case 2: Active Edge ($H_1$).}
If the edge exists, by Definition~\ref{def:strong_faithfulness}, 
$I = 2\tau_\epsilon + \gamma$ for some $\gamma > 0$.
We have: 
\begin{align}
  &  |\hat{I}_N - I| = \Delta \notag\\
  &\iff - \Delta \leq \hat{I}_N - I \leq \Delta \notag\\
  &\iff I - \Delta \leq \hat{I}_N \leq \Delta + I \notag\\
  &\iff 2\tau_\epsilon + \gamma - \tau_\epsilon \leq \hat{I}_N \leq \Delta + I \notag\\
  &\iff \tau_\epsilon + \gamma \leq \hat{I}_N \leq \Delta + I \\
  &\iff \tau_\epsilon < \hat{I}_N
\end{align}
Since \(\gamma > 0\) we have \(\hat{I}_N > \tau_\epsilon\). The estimator detects an edge  (Correct Detection). Thus the graph is identifiable
\end{proof}

\subsection{Proof of Theorem~\ref{thm:trace_soundness}}
\begin{proof}
    We proceed by induction on the time index $t \in \{1, \dots, L\}$ knowing temporal precedence (Assumption~\ref{assumption:temporal_precedence}).

    \textbf{Goal:} We show that for every $t$, the estimated parent set $\widehat{Pa}(x_t)$ is exactly the true parent set $Pa(x_t)$ in the sample time causal graph $\mathcal{G}_{t,s}$.

    \textbf{Base Case ($t=1$):}
    Consider the first event $x_1$. By the temporal precedence and causal sufficiency (Assumption \ref{assumption:causal_sufficiency}), $x_1$ has no ancestors in the observed sequence. Thus, the true parent set is $Pa(x_1) = \emptyset$.
    The TRACE algorithm evaluates candidates $e_{1-k}$ for $k \ge 1$. Since no such events exist in the sequence, the candidate set is empty. TRACE returns $\widehat{Pa}(E_1) = \emptyset$.
Thus, $\widehat{Pa}(E_1) = Pa(E_1)$.

\textbf{Heredity:}
Assume that for all time steps $j < t$, the algorithm has correctly identified the local structure (though note that the decision for $e_t$ depends only on the history $x_{<t}$, not on previous graph decisions).
We consider the event $E_t$. With the full variant, the algorithm iterates through all valid past events $E_{t-k} \in E_{<t}$ as candidate parents. \emph{For each candidate}, we apply the decision rule based on the estimator $\hat{I}_N$ (Def.~\ref{def:strong_faithfulness}) assuming that no hidden confounders alters the CI-tests (Assumption~\ref{assumption:causal_sufficiency}):

\begin{itemize}\item \textbf{Case 1: $E_{t-k}$ is a True Parent ($E_{t-k} \in Pa(E_t)$).} By the $\epsilon$-Strong Faithfulness assumption (Def.~\ref{def:strong_faithfulness}), the true conditional mutual information satisfies $I > 2\tau_\epsilon$. By Lemma~\ref{lemma:identifiability} (Identifiability), this ensures that the estimator satisfies $\hat{I}_N > \tau_\epsilon$ asymptotically. Consequently, TRACE \textbf{accepts} the edge.\item \textbf{Case 2: $E_{t-k}$ is Not a Parent ($E_{t-k} \notin Pa(E_t)$).}
By the Causal Markov Condition, conditioned on the history $x_{<t}$ (which contains the true parents), $E_t$ is independent of non-descendants. Thus, $I(E_t; E_{t-k} | X_{<t}) = 0$.
By Lemma~\ref{lemma:identifiability}, the estimator is bounded by the noise floor: $\hat{I}_N \le \tau_\epsilon$.
Consequently, TRACE \textbf{rejects} the edge.
\end{itemize}Since the algorithm makes the correct decision for every candidate $E_{t-k}$ individually, the resulting set $\widehat{Pa}(E_t)$ is identical to $Pa(E_t)$.

\textbf{Conclusion:} By induction, $\widehat{Pa}(E_t) = Pa(E_t)$ for all $t=1, \dots, L$. Since the graph $\mathcal{G}_{t,s}$ is defined by the union of these parent sets, TRACE recovers $\mathcal{G}_{t,s}$ exactly.

Consequently, by Definition~\ref{def:iscg}, TRACE recovers the sample summary causal graph \(\mathcal{G}_s\).
\end{proof}

\section{Limitations \& Assumptions}\label{sec:limitation}
We now include a discussion regarding the main assumptions taken in this paper and the one that we exclude.

\subsection{Required for TRACE}
\subsubsection{Causal Sufficiency}\label{sec:limitation_causal_sufficency}
A fundamental assumption in causal discovery is \textit{Causal Sufficiency} (Assumption~\ref{assumption:causal_sufficiency})—the premise that no unobserved confounders influence the system. Since TRACE relies on pre-trained backbones which may have learned from noisy or incomplete data, we empirically evaluate the robustness of TRACE under controlled violations of causal sufficiency, focusing on two realistic forms of hidden confounding.

\paragraph{Measurement Error (Noise Injection).} In Fig. \ref{fig:limitations}(a), we simulate measurement error by randomly replacing valid tokens in the history with noise ($P_{noise}$). While Recall naturally degrades as the true causal parents are obscured, \textbf{Precision remains high} ($>0.8$) even when 40\% of the context is corrupted. This confirms that TRACE does not "rationalize" noise; if the causal signal $X \to Y$ is destroyed by measurement error, the model assigns $CMI \approx 0$ rather than hallucinating a spurious link.

\paragraph{Missing Intermediaries (Temporal Drops).} In the same Fig. \ref{fig:limitations}(b), we simulate missing data by randomly dropping time steps, effectively hiding intermediate nodes in the causal chain ($X \to Z_{hidden} \to Y$). This is a more critical scenario where the conditioning sets are broken. We observe that TRACE is robust to moderate data loss ($P_{drop} < 0.2$). 
\begin{figure}[!h]
    \centering
    \includegraphics[width=0.9\linewidth]{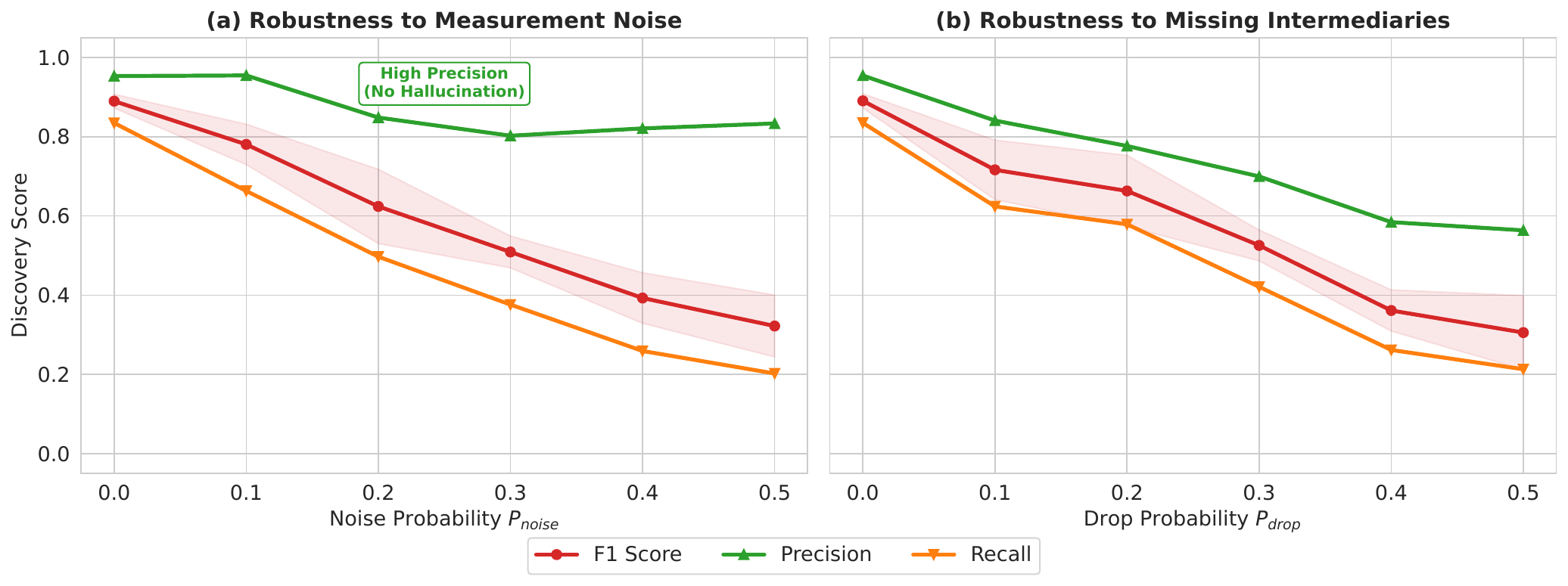}
\caption{\textbf{Robustness to Hidden Confounding.} Evaluation of TRACE under violations of causal sufficiency. \textbf{(a)} Measurement Error: Random noise is injected into the context. Precision stays high, indicating resistance to hallucination. \textbf{(b)} Temporal Drops: Time steps are randomly deleted, thus conditioning sets are broken. TRACE still recovers structure despite missing intermediaries but the discovery scores quickly decrease.}
\label{fig:limitations}
\end{figure}

\subsubsection{Temporal Precedence \& Instantaneous Effects}
TRACE assumes that causal influence respects temporal precedence, which means observation are perfectly recorded over time and therefore does not model instantaneous causal effects between events occurring at the same time index. This assumption is standard in sequential causal discovery and ensures that the recovered causal graph is acyclic and identifiable in the single observed sequence setting.

From a theoretical standpoint, instantaneous effects are not identifiable from a single observed trajectory without additional parametric assumptions, repeated samples, or access to interventions. In practice, apparent simultaneity often \emph{arises from time discretization}, logging resolution, or batching effects. TRACE interprets such cases \emph{through the earliest observable temporal ordering}, yielding a conservative but identifiable causal structure. As a result, the recovered summary graph captures directed causal influence with positive temporal delay, rather than true simultaneity.

\subsubsection{Consistency Through Time \& Ergodicity}
TRACE operates in a one-shot regime, where causal structure must be inferred from a single observed trajectory rather than from repeated i.i.d.\ samples \emph{during inference}. To make this statistically meaningful, we assume \emph{consistency through time}: the causal mechanisms governing the generation of events are invariant across time indices. In other words, causal directionality does not reverse over time. For example, if \(A \rightarrow B\), then latter it is assume that \(B \not\rightarrow A\)

This assumption is strictly weaker than stationarity~\cite{assaad_survey_ijcai_cd_time_series}. While the marginal distribution of $\{X_t\}$ may vary over time, the underlying causal dependencies—encoded by the directed edges of the sample time causal graph remain stable. This form of causal invariance is standard in sequential settings and underlies the validity of summary causal graphs that collapse time-indexed relations into event-to-event dependencies (Def.~\ref{def:iscg}).

In addition, TRACE relies on \emph{ergodicity} of the data-generating process. Ergodicity ensures that population-level quantities such as entropies, conditional mutual information, and KL divergences can be consistently approximated using time averages along a single long trajectory. This assumption justifies the estimation of interventional information-theoretic quantities from a single observed sequence using a pretrained autoregressive model.

Together, consistency through time and ergodicity are sufficient to enable causal discovery from a single trajectory, without requiring repeated samples or strict stationarity of the observed process at inference.

\subsection{Not-required for TRACE}
We now list the notable assumption that we \textbf{don't take} in this paper. 

\subsubsection{Stationarity}
The Stationarity assumption states that the generative process \( \{X_t\}\) does not change with respect to time. Stationarity of the underlying process is a common simplifying assumption in time series causal discovery~\cite{assaad_survey_ijcai_cd_time_series}, but causal structure itself—defined in terms of temporal precedence and directed edges—is a property of the generative mechanisms and does not by itself imply stationarity of the observed sequence.

Importantly, TRACE does not require stationarity of the observed sequence. In practice, if using autoregressive Transformers as the AR Model, they can represent non-stationary distributions through contextualized representations (e.g., positional or time embeddings~\cite{transformerhawkeprocess}), allowing the model to adapt its predictions to evolving regimes without assuming time-invariant marginals. This enables modeling evolving regimes common in real-world logs or patients trajectories.

\subsubsection{Parametric Assumption}
No parametric form is assumed for the transition dynamics beyond the expressivity of the autoregressive model

\section{Evaluation}\label{appendix:evaluation}
We used an \(ml.g5.4xlarge\) sample from AWS Sagemaker, which contains 8 vCPUs and 1 NVIDIA A10G as GPU with 24GiB for training and inference.

\subsection{Nonlinear SCMs.}
As we saw, standard causal discovery algorithms for multivariate time series (e.g., PCMCI, Neural Hawkes Processes, CASCADE) are not applicable in our setting. Moreover, evaluating causal discovery in high-dimensional event sequences is notoriously difficult due to the lack of ground-truth annotations in real-world traces (e.g., server logs, medical records). Furthermore, generic high-order MC are computationally intractable to materialize due to the state space \(|\mathcal{X}|^m\) of \(\{X_t\}\) and unlearnable in high dimensions without structural assumptions.

\paragraph{Synthetic Data}\label{appendix:scm}
We introduce a synthetic benchmark based on a nonlinear Structural Causal Model (SCM). 
Let $\mathcal{X}$ be the set of event types,
$s = (x_1,\dots,x_L) \in \mathcal{X}^{L}$ a discrete sequence, and
$h$ the history window.
The SCM generative process is
\begin{equation}
  P(X_t \mid X_{t-h:t-1})
  = \operatorname{softmax}\!\left(
      \mathbf{b}
      + \sum_{k=1}^{h} e^{-(k-1)}\,\mathbf{W}[x_{t-k}]
      + \operatorname{ReLU}\!\left(
          \bigl[E_{x_{t-h}},\dots,E_{x_{t-1}}\bigr]\,\mathbf{W}_1
        \right)\mathbf{W}_2
    \right),
  \label{eq:scm}
\end{equation}
where $\mathbf{W}\in\mathbb{R}^{|\mathcal{X}|\times |\mathcal{X}|}$ is a sparse interaction matrix
(sparsity $= 0.9$, mixed-sign), $E\in\mathbb{R}^{|\mathcal{X}|\times 16}$ are fixed random
embeddings, and $\mathbf{W}_1\in\mathbb{R}^{h\cdot 16\times 64}$,
$\mathbf{W}_2\in\mathbb{R}^{64\times |\mathcal{X}|}$ form a one-hidden-layer MLP
that captures non-linear interactions.
The ground-truth adjacency is determined by counterfactual
intervention~\citep{pearl_2009}: $i\to j$ if
$\mathbb{E}\bigl[\mathrm{KL}(P_{\mathrm{orig}}\!\parallel P_{\mathrm{do}(x_{t-k}=\epsilon)})\bigr]>\delta$.
We tune the sparsity of the weight matrix \(\mathbf{W}\) to obtain the Shannon redundancy~\cite{shannon1951prediction} as \( 1-\frac{H(P)}{H_{max}} = 1-\frac{H(P)}{\log(|\mathcal{X}|)}\) superior or equal to \(58\%\) across the benchmarked SCMs. 

\paragraph{Evaluation via intervention.}
We always evaluate TRACE on the summary graph. To measure performance, we perform atomic interventions by measuring the average KL divergence over 10 by uniformly randomizing \(E_{t-\ell}\) and measuring the average KL divergence over 10 counterfactual between post-intervention and observational distributions of \(E_{t}\). If the divergence is above \(\tau> 0.05\), an edge \(E_t \rightarrow E_{t-\ell}\) exists in the summary graph.

\paragraph{Justification for Evaluating on SCMs.}
As we saw, standard causal discovery algorithms for multivariate time series (e.g., PCMCI, Neural Hawkes Processes, CASCADE) are not applicable in our setting. Moreover, evaluating causal discovery in high-dimensional event sequences is notoriously difficult due to the lack of ground-truth annotations in real-world traces (e.g., server logs, medical records). Furthermore, generic high-order MC are computationally intractable to materialize due to the state space \(|\mathcal{X}|^m\) of \(\{X_t\}\) and unlearnable in high dimensions without structural assumptions.

\paragraph{Practical Monitoring: The Oracle Score.}
Since the true entropy $H(P)$ and thus the true KL divergence are unknown, we must approximate $\epsilon$ empirically. We observe that the cross-entropy loss decomposes as $\mathcal{L}_{AR}(\theta) = H(P) + D_{KL}(P||P_\theta)$. We propose the \textit{Oracle Score} $\hat{\epsilon}$ as a normalized estimator of the excess entropy:
\begin{equation} \label{eq:oracle_score}
\hat{\epsilon}(P_\theta) = \frac{\mathcal{L}_{AR}(\theta) - H(P)}{H_{max} - H(P)}
\end{equation}
where \(H_{max} = \log |\mathcal{X}|\) is the maximum entropy (uniform noise) and $H$ is the irreducible entropy of the DGP (approximated by the minimum validation loss observed).

This metric provides a vocabulary-agnostic measure of fit: $\hat{\epsilon} \to 0$ implies convergence to the theoretical limit ($P_\theta \to P$). 

\subsection{Additional Ablations}\label{sec:additional_ablations}

\subsubsection{Number of Particles \(N\)}
We observe in Fig.~\ref{fig:ablation_n_particles} that the number of particles \(N\) is a crucial parameter to reduce the amount of missing causal relationships (Recall) and SHD. After \(N > 256\), however, we don't observe significant changes. 
\begin{figure}[!h]
    \centering
\includegraphics[width=0.5\linewidth]{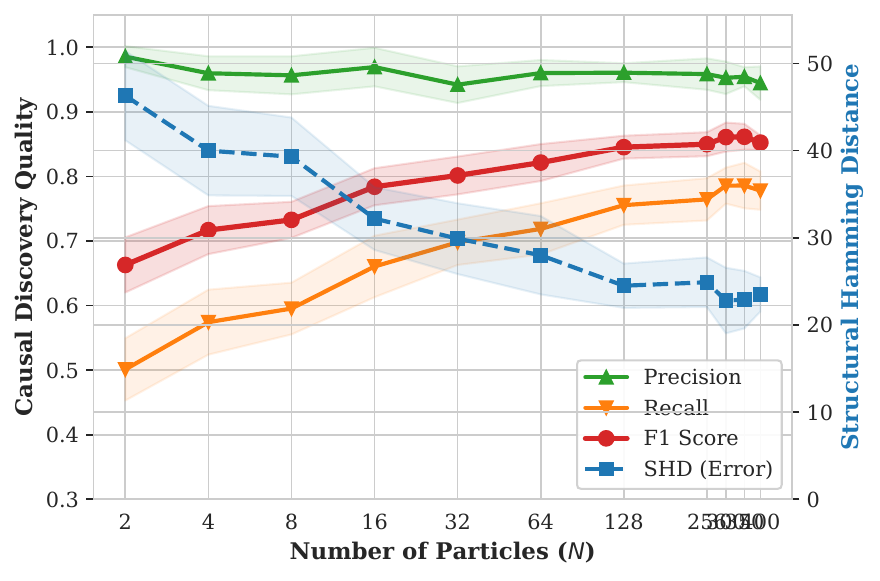}
    \caption{Evolution of the causal discovery performance in function of the number of particles \(N\) at \(|\mathcal{X}| = 1000, m=6\)}
\label{fig:ablation_n_particles}
\end{figure}

\subsubsection{Empirical Validation of $\epsilon$-Strong Faithfulness.}\label{appendix:validation_of_epsilon_faithfulness}
Fig.~\ref{fig:sensitivity_threshold} demonstrates that the standard faithfulness assumption ($\tau = 0$) is untenable in practice, as it fails to distinguish causal signals from finite-sample approximation noise (resulting in low precision). Conversely, the distinct performance peak at $\tau_{opt} \approx 1.4 \times 10^{-5}$ empirically validates the \textbf{$\epsilon$-Strong Faithfulness} hypothesis (Def.~\ref{def:strong_faithfulness}), confirming that causal discovery requires a minimum signal-to-noise ratio. Notably, $\tau_{opt}$ is orders of magnitude lower than the worst-case theoretical bound derived in Theorem~\ref{thm:total_error_bound}, suggesting that the \textit{average-case} estimation error is significantly tighter than the Winter inequality implies.

\begin{figure}[!h]
    \centering
    \includegraphics[width=0.5\linewidth]{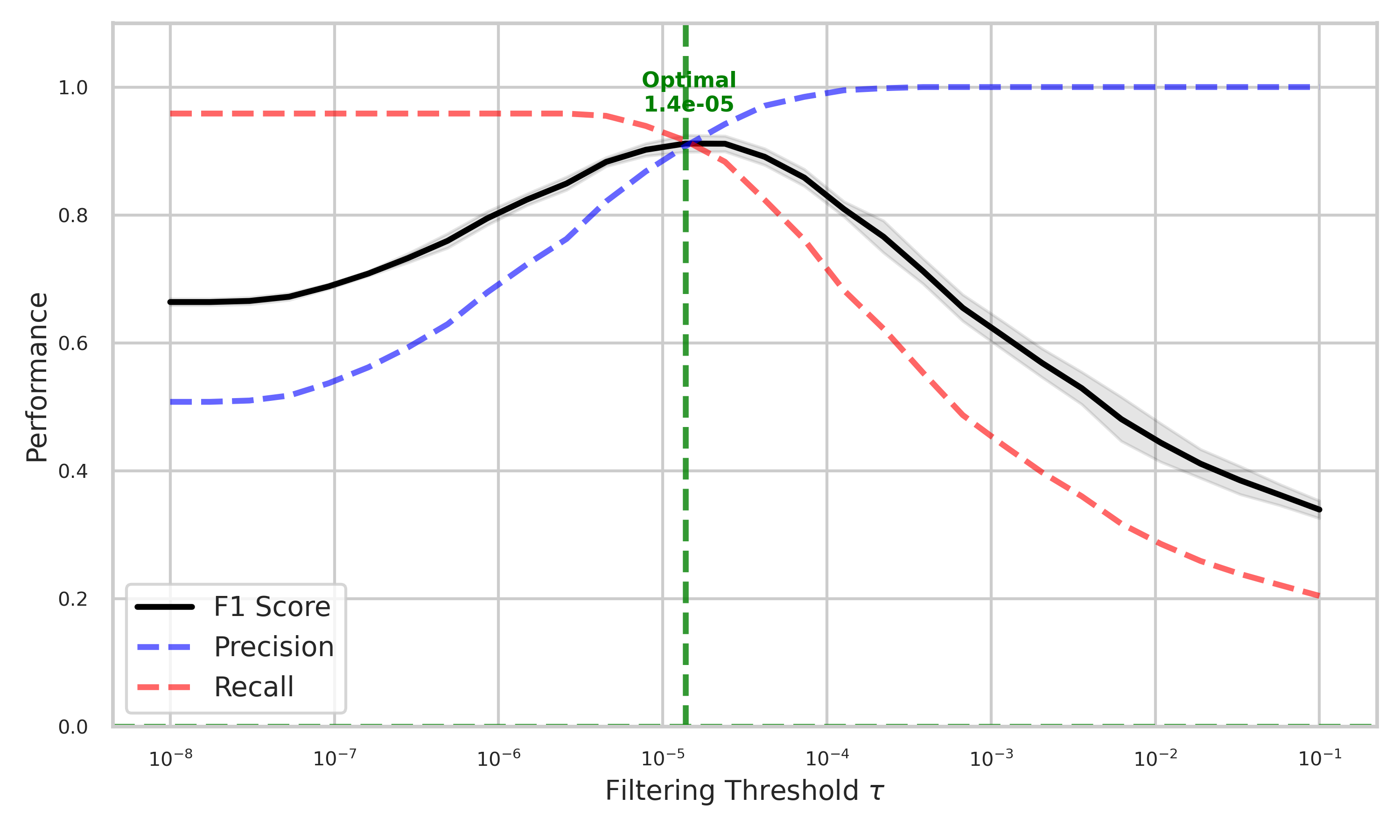}
\caption{\textbf{Sensitivity Analysis.} Classification metrics relative to the filtering threshold $\tau$ with $|\mathcal{X}|=1000, \epsilon = 0.04, c=6, m=6, L=64, N=128$).} %The inverted-U shape confirms the existence of an optimal identifiability window ($\tau_{opt}$), separating approximation noise from true causal signals.}
\label{fig:sensitivity_threshold}
\end{figure}

\subsubsection{Scaling Law for Causal Identifiability}
In the right \(\epsilon\)-regime \(\epsilon \in [0, 0.1]\), we observe an inverse scaling law $\tau_{opt} \propto |\mathcal{X}|^{-1}$ for the optimal threshold \(\tau\) in Fig.~\ref{fig:scaling_law_for_causal_identifiability}:
\[\tau(\mathcal{X}) = \frac{C}{|\mathcal{X}|}, C=1.72\cdot 10^{-2}\]
This indicates that while the density estimation task becomes harder in high dimensions (higher entropy), the structural identifiability actually improves. The sparsity of the high-dimensional event space dilutes spurious correlations, effectively lowering the noise floor and allowing for the discovery of weaker causal signals. We might also attribute this to the ability of the LM (LLaMa) to learn complex joint distributions from a massive vocabulary.

\begin{figure}[!h]
    \centering
    \includegraphics[width=0.55\linewidth]{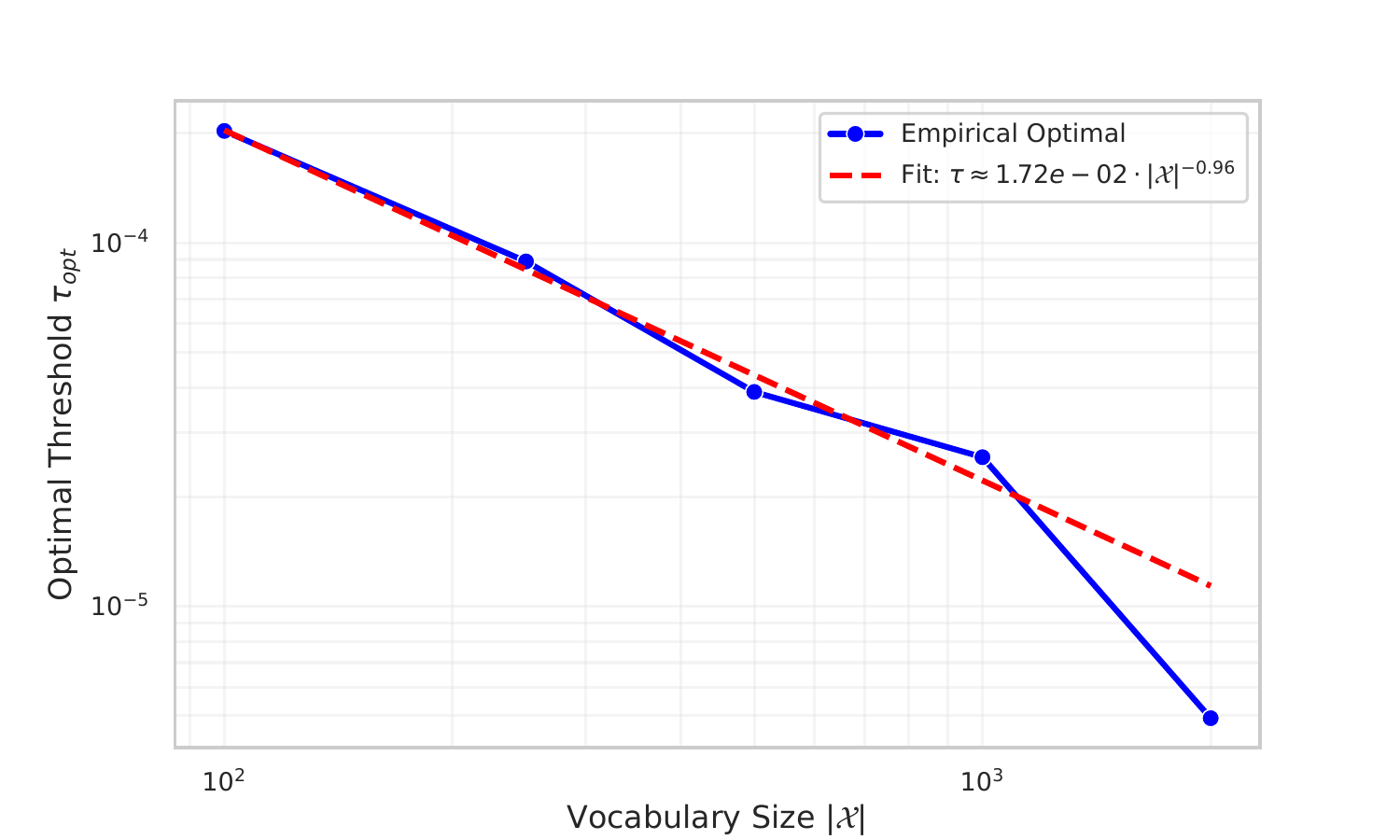}
\caption{\textbf{The Scaling Law of Causal Identifiability.} We perform a sensitivity analysis across vocabulary sizes $|\mathcal{X}| \in \{100, \dots, 2000\}$ to find the optimal filtering threshold $\tau_{opt}$ (maximizing F1-score) with \(\epsilon=0.04\). The result reveals a power-law relationship $\tau_{opt} \propto |\mathcal{X}|^{-0.96}$.}
\label{fig:scaling_law_for_causal_identifiability}
\end{figure}

%In high-dimensional event logs, assuming complete observability is often realistic: logs are designed to be exhaustive records of system states. However, latent confounders (unlogged events) pose a theoretical risk. We argue that in high-order settings, the rich history \(X_{<t}\) acts as a \textit{proxy} for latent states. If an unobserved cause (Z) leaves a footprint in the history (e.g., \(Z \to A\) and \(Z \to B)\), a high-capacity model can learn the dependency \(A \to \dots \to B\) mediated by the context, effectively adjusting for the confounding bias implicitly. We emp.
%Experiment: The "Hidden Variable" Ablation.Take your SCM (Ground Truth).Select a "Hub" node $Z$ that causes many things ($Z \to A, Z \to B$).Hide $Z$ from the input sequence (remove it from the tokens fed to CaLM).Run CaLM.Check if CaLM hallucinates an edge $A \to B$ (Spurious correlation).Plot: False Positive Rate vs. 
%Hidden Variables.Message: "Even with 10% of variables hidden, the False Positive Rate remains low, suggesting the model uses the remaining context to distinguish causation from spurious correlation."

%Inject random tokens into the input sequence at inference time (replacing real events with random vocabulary).
%Plot: F1 Score vs. Noise Ratio (0% to 50%).

\section{Additional Figures and Tables}

\begin{figure}[!h]
    \centering
    \includegraphics[width=0.8\linewidth]{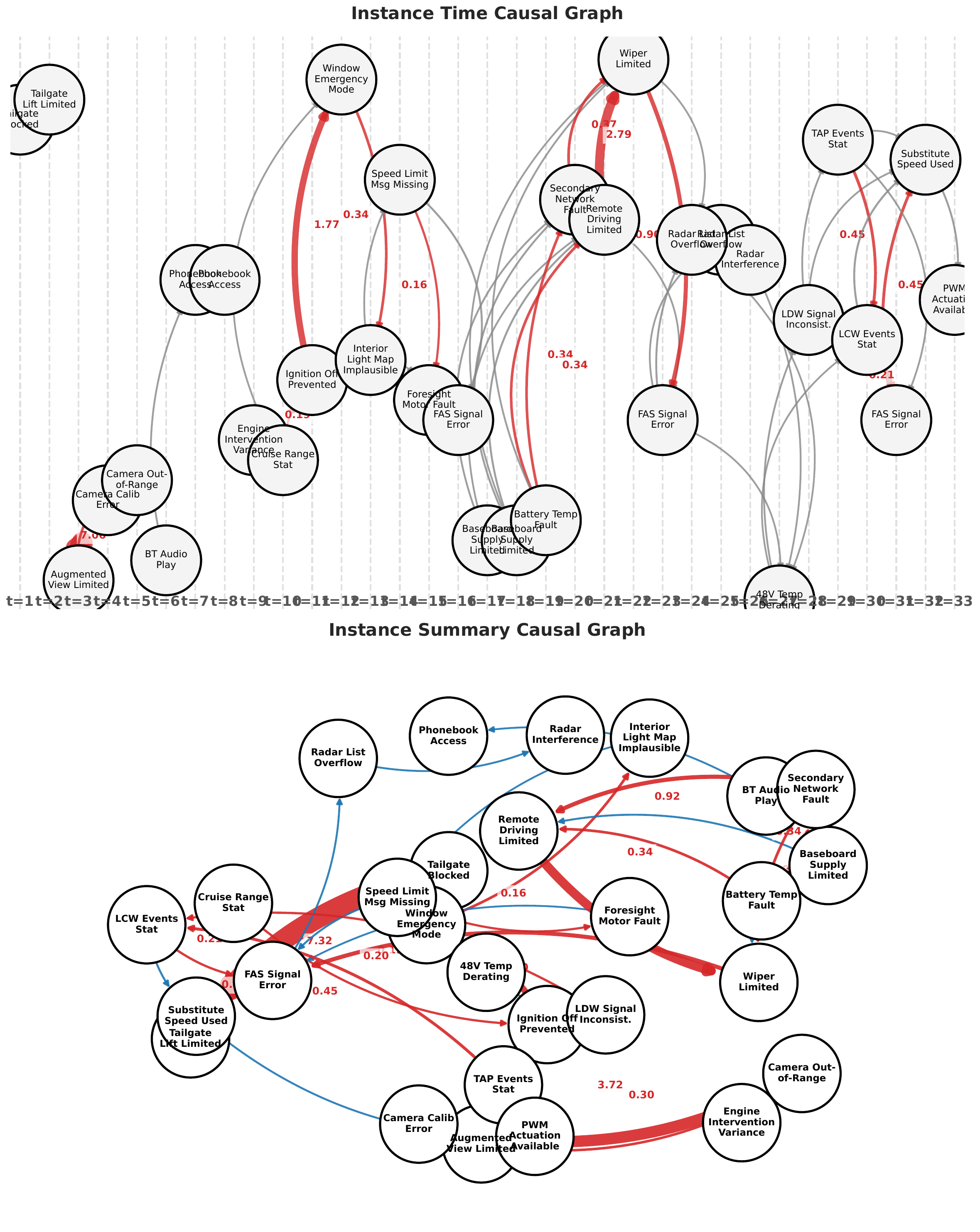}
    \caption{\textbf{Sample Summary Causal Graph.} The global causal structure $\mathcal{G}_s$ aggregated from TRACE inferences over the validation set. While the instance graph (Fig. \ref{fig:time_instance_graph_dtc}) details \textit{when} events occur, this summary graph captures the invariant mechanism types. The density of the graph highlights the complexity of modern vehicle architectures, where high-degree nodes often represent central control units (e.g., ECU, Battery Management) that propagate cascading faults. The conditional mutual information is reported as causal strength.}
    \label{fig:carformer_summary_appendix}
\end{figure}

\newpage
\end{document}